\theoremstyle{plain}
\newtheorem{theorem}{Theorem}[section]
\newtheorem{proposition}[theorem]{Proposition}
\newtheorem{lemma}[theorem]{Lemma}
\theoremstyle{definition}
\theoremstyle{remark}
\newcommand{\ourmethod}{GCP-CROWN\xspace}
\newcommand{\ourmethodfull}{\ourmethod with MIP cuts\xspace}
\newcommand{\revisedtext}[1]{{#1}}
\DeclareDocumentCommand\W{ g g }{%
        \IfNoValueTF {#1} {\mathbf{W}} {
            \IfNoValueTF {#2} {\mathbf{W}^{(#1)}}{\mathbf{W}^{(#1)}_{#2}}
        }
}
\DeclareDocumentCommand\bias{ g g }{%
        \IfNoValueTF {#1} {\mathbf{b}} {
            \IfNoValueTF {#2} {\mathbf{b}^{(#1)}}{\mathbf{b}^{(#1)}_{#2}}
        }
}
\DeclareDocumentCommand\betavar{ g g }{%
        \IfNoValueTF {#1} {\bm{\beta}} {
            \IfNoValueTF {#2} {{\bm{\beta}^{(#1)}}{}}{\bm{\beta}^{(#1)}_{#2}}
        }
}
\DeclareDocumentCommand\xivar{ g g }{%
        \IfNoValueTF {#1} {\bm{\xi}} {
            \IfNoValueTF {#2} {{\bm{\xi}^{(#1)}}{}}{\bm{\xi}^{(#1)}_{#2}}
        }
}
\DeclareDocumentCommand\xivarn{ g g }{%
        \IfNoValueTF {#1} {\bm{\xi^-}} {
            \IfNoValueTF {#2} {\bm{\xi^-}^{+(#1)}}{\bm{\xi^-}^{+(#1)}_{#2}}
        }
}
\DeclareDocumentCommand\xivarp{ g g }{%
        \IfNoValueTF {#1} {\bm{\xi^+}} {
            \IfNoValueTF {#2} {\bm{\xi^+}^{+(#1)}}{\bm{\xi^+}^{+(#1)}_{#2}}
        }
}
\DeclareDocumentCommand\nuvar{ g g }{%
        \IfNoValueTF {#1} {{\bm{\nu}}} {
            \IfNoValueTF {#2} {{\bm{\nu}^{(#1)}}{}}{\nu^{(#1)}_{#2}{}}
        }
}
\DeclareDocumentCommand\hnuvar{ g g }{%
        \IfNoValueTF {#1} {\bm{\hat{\nu}}} {
            \IfNoValueTF {#2} {{\bm{\hat{\nu}}^{(#1)}}{}}{\hat{\nu}^{(#1)}_{#2}{}}
        }
}
\DeclareDocumentCommand\muvar{ g g }{%
        \IfNoValueTF {#1} {\bm{\mu}} {
            \IfNoValueTF {#2} {{\bm{\mu}^{(#1)}}{}}{\mu^{(#1)}_{#2}}
        }
}
\DeclareDocumentCommand\tauvar{ g g }{%
        \IfNoValueTF {#1} {\bm{\tau}} {
            \IfNoValueTF {#2} {{\bm{\tau}^{(#1)}}{}}{\tau^{(#1)}_{#2}}
        }
}
\DeclareDocumentCommand\pivar{ g g }{%
        \IfNoValueTF {#1} {\bm{\pi}} {
            \IfNoValueTF {#2} {{\bm{\pi}^{(#1)}}{}}{\pi^{(#1)}_{#2}}
        }
}
\DeclareDocumentCommand\gammavar{ g g }{%
        \IfNoValueTF {#1} {\bm{\gamma}} {
            \IfNoValueTF {#2} {{\bm{\gamma}^{(#1)}}{}}{\gamma^{(#1)}_{#2}}
        }
}
\DeclareDocumentCommand\lambdavar{ g g }{%
        \IfNoValueTF {#1} {\bm{\lambda}} {
            \IfNoValueTF {#2} {{\bm{\lambda}^{(#1)}}{}}{\lambda^{(#1)}_{#2}}
        }
}
\DeclareDocumentCommand\tbetavar{ g g }{%
        \IfNoValueTF {#1} {{\bm{\tilde{\beta}}}} {
            \IfNoValueTF {#2} {{{\bm{\tilde{\beta}}}^{(#1)}}{}}{{{\tilde{\beta}}^{(#1)}_{#2}}}
        }
}
\DeclareDocumentCommand\alphavar{ g g }{%
        \IfNoValueTF {#1} {\bm{\alpha}} {
            \IfNoValueTF {#2} {{\bm{\alpha}^{(#1)}}}{\alpha^{(#1)}_{#2}}
        }
}
\DeclareDocumentCommand\hfunc{ g g }{%
        \IfNoValueTF {#1} {h} {
            \IfNoValueTF {#2} {{{h}^{(#1)}}}{h^{(#1)}_{#2}}
        }
}
\DeclareDocumentCommand\zcut{ g g }{%
        \IfNoValueTF {#1} {\bm{q}} {
            \IfNoValueTF {#2} {{\bm{q}^{(#1)}}}{q^{(#1)}_{#2}}
        }
}
\DeclareDocumentCommand\Zcut{ g g }{%
        \IfNoValueTF {#1} {\bm{Q}} {
            \IfNoValueTF {#2} {{\bm{Q}^{(#1)}}}{\bm{Q}^{(#1)}_{#2}}
        }
}
\DeclareDocumentCommand\xcut{ g g }{%
        \IfNoValueTF {#1} {\bm{h}} {
            \IfNoValueTF {#2} {{\bm{h}^{(#1)}}}{h^{(#1)}_{#2}}
        }
}
\DeclareDocumentCommand\Xcut{ g g }{%
        \IfNoValueTF {#1} {\bm{H}} {
            \IfNoValueTF {#2} {{\bm{H}^{(#1)}}}{\bm{H}^{(#1)}_{#2}}
        }
}
\DeclareDocumentCommand\hxcut{ g g }{%
        \IfNoValueTF {#1} {\bm{g}} {
            \IfNoValueTF {#2} {{\bm{g}^{(#1)}}}{g^{(#1)}_{#2}}
        }
}
\DeclareDocumentCommand\hXcut{ g g }{%
        \IfNoValueTF {#1} {\bm{G}} {
            \IfNoValueTF {#2} {{\bm{G}^{(#1)}}}{\bm{G}^{(#1)}_{#2}}
        }
}
\DeclareDocumentCommand\D{ g g }{%
        \IfNoValueTF {#1} {\mathbf{D}} {
            \IfNoValueTF {#2} {\mathbf{D}^{(#1)}}{\mathbf{D}^{(#1)}_{#2}}
        }
}
\DeclareDocumentCommand\A{ g g }{%
        \IfNoValueTF {#1} {\mathbf{A}} {
            \IfNoValueTF {#2} {\mathbf{A}^{(#1)}}{\mathbf{A}^{(#1)}_{#2}}
        }
}
\DeclareDocumentCommand\a{ g g }{%
        \IfNoValueTF {#1} {\mathbf{a}} {
            \IfNoValueTF {#2} {\mathbf{a}^{(#1)}}{{a}^{(#1)}_{#2}}
        }
}
\DeclareDocumentCommand\al{ g g }{%
        \IfNoValueTF {#1} {\underline{\mathbf{a}}} {
            \IfNoValueTF {#2} {\underline{\mathbf{a}}^{(#1)}}{\underline{a}^{(#1)}_{#2}}
        }
}
\DeclareDocumentCommand\au{ g g }{%
        \IfNoValueTF {#1} {\overline{\mathbf{a}}} {
            \IfNoValueTF {#2} {\overline{\mathbf{a}}^{(#1)}}{\overline{a}^{(#1)}_{#2}}
        }
}
\DeclareDocumentCommand\c{ g }{%
        \IfNoValueTF {#1} {\bm{c}} {
            {c^{({#1})}}
        }
}
\DeclareDocumentCommand\cl{ g }{%
        \IfNoValueTF {#1} {\underline{c}} {
            {\underline{c}^{({#1})}}
        }
}
\DeclareDocumentCommand\cu{ g }{%
        \IfNoValueTF {#1} {\overline{c}} {
            {\overline{c}^{({#1})}}
        }
}
\DeclareDocumentCommand\AA{ g g }{
        \IfNoValueTF {#1} {\mathbf{\Omega}} {
            \IfNoValueTF {#2} {\mathbf{\Omega}(#1, #1)}{\mathbf{\Omega}(#1, #2)}
        }
}
\DeclareDocumentCommand\S{ g g }{%
        \IfNoValueTF {#1} {\mathbf{S}} {
            \IfNoValueTF {#2} {\mathbf{S}^{(#1)}}{\mathbf{S}^{(#1)}_{#2}}
        }
}
\DeclareDocumentCommand\K{ g g }{%
        \IfNoValueTF {#1} {\mathbf{K}} {
            \IfNoValueTF {#2} {\mathbf{K}^{(#1)}}{\mathbf{K}^{(#1)}_{#2}}
        }
}
\DeclareDocumentCommand\B{ g g }{%
        \IfNoValueTF {#1} {\mathbf{B}} {
            \IfNoValueTF {#2} {\mathbf{B}^{(#1)}}{\mathbf{B}^{(#1)}_{#2}}
        }
}
\DeclareDocumentCommand\lowerb{ g g }{%
        \IfNoValueTF {#1} {{\mathbf{\underline{b}}}} {
            \IfNoValueTF {#2} {{\mathbf{\underline{b}}}^{(#1)}}{{\mathbf{\underline{b}}}^{(#1)}_{#2}}
        }
}
\DeclareDocumentCommand\z{ g g }{%
        \IfNoValueTF {#1} {\mathbf{z}} {
            \IfNoValueTF {#2} {\mathbf{z}^{(#1)}}{z^{(#1)}_{#2}}
        }
}
\DeclareDocumentCommand\hz{ g g }{%
        \IfNoValueTF {#1} {\hat{\mathbf{z}}} {
            \IfNoValueTF {#2} {\hat{\mathbf{z}}^{(#1)}}{\hat{z}^{(#1)}_{#2}}
        }
}
\DeclareDocumentCommand\hx{ g g }{%
        \IfNoValueTF {#1} {\hat{\boldsymbol{x}}} {
            \IfNoValueTF {#2} {\hat{\boldsymbol{x}}^{(#1)}}{\hat{x}^{(#1)}_{#2}}
        }
}
\DeclareDocumentCommand\x{ g g }{%
        \IfNoValueTF {#1} {\boldsymbol{x}} {
            \IfNoValueTF {#2} {\boldsymbol{x}^{(#1)}}{x^{(#1)}_{#2}}
        }
}
\DeclareDocumentCommand\bu{ g g }{%
        \IfNoValueTF {#1} {\boldsymbol{u}} {
            \IfNoValueTF {#2} {\boldsymbol{u}^{(#1)}}{{u}^{(#1)}_{#2}}
        }
}
\DeclareDocumentCommand\buh{ g g }{%
        \IfNoValueTF {#1} {\boldsymbol{u}^*} {
            \IfNoValueTF {#2} {\boldsymbol{{u}}^{*(#1)}}{{{u}}^{*(#1)}_{#2}}
        }
}
\DeclareDocumentCommand\bl{ g g }{%
        \IfNoValueTF {#1} {\boldsymbol{l}} {
            \IfNoValueTF {#2} {\boldsymbol{l}^{(#1)}}{{l}^{(#1)}_{#2}}
        }
}
\DeclareDocumentCommand\blh{ g g }{%
        \IfNoValueTF {#1} {\boldsymbol{l}^*} {
            \IfNoValueTF {#2} {\boldsymbol{l}^{*(#1)}}{{{l}}^{*(#1)}_{#2}}
        }
}
\DeclareDocumentCommand\aaa{ g }{%
        \IfNoValueTF {#1} {\bm{a}} {
            {\bm{a}^{({#1})}}
        }
}
\DeclareDocumentCommand\haaa{ g }{%
        \IfNoValueTF {#1} {\bm{\hat{a}}} {
            {\bm{\hat{a}}^{({#1})}}
        }
}
\DeclareDocumentCommand\bbb{ g g }{%
        \IfNoValueTF {#1} {\mathbf{P}} {
            \IfNoValueTF {#2} {{\mathbf{P}_{#1}}}{{\mathbf{P}_{#1}^{({#2})}}}
        }
}
\DeclareDocumentCommand\hbbb{ g g }{%
        \IfNoValueTF {#1} {\mathbf{\hat{P}}} {
            \IfNoValueTF {#2} {{\mathbf{\hat{P}}_{#1}}}{{\mathbf{\hat{P}}_{#1}^{({#2})}}}
        }
}
\DeclareDocumentCommand\ccc{ g g }{%
        \IfNoValueTF {#1} {\mathbf{q}} {
            \IfNoValueTF {#2} {{\mathbf{q}_{#1}}}{{\mathbf{q}_{#1}^{(#2)}}{}}
        }
}
\DeclareDocumentCommand\constc{ g }{%
        \IfNoValueTF {#1} {c} {
            {c^{({#1})}}
        }
}
\DeclareDocumentCommand\setz{ g g }{%
        \IfNoValueTF {#1} {\mathcal{Z}} {
            \IfNoValueTF {#2} {\mathcal{Z}^{(#1)}}{\mathcal{Z}^{(#1)}_{#2}}
        }
}
\DeclareDocumentCommand\setzp{ g g }{%
        \IfNoValueTF {#1} {\mathcal{Z^+}} {
            \IfNoValueTF {#2} {\mathcal{Z}^{+(#1)}}{\mathcal{Z}^{+(#1)}_{#2}}
        }
}
\DeclareDocumentCommand\setzn{ g g }{%
        \IfNoValueTF {#1} {\mathcal{Z^-}} {
            \IfNoValueTF {#2} {\mathcal{Z}^{-(#1)}}{\mathcal{Z}^{-(#1)}_{#2}}
        }
}
\DeclareDocumentCommand\tsetz{ g g }{%
        \IfNoValueTF {#1} {\tilde{\mathcal{Z}}} {
            \IfNoValueTF {#2} {\tilde{\mathcal{Z}}^{(#1)}}{\tilde{\mathcal{Z}}^{(#1)}_{#2}}
        }
}
\DeclareDocumentCommand\seti{ g g }{%
        \IfNoValueTF {#1} {\mathcal{I}} {
            \IfNoValueTF {#2} {\mathcal{I}^{(#1)}}{\mathcal{I}^{(#1)}_{#2}}
        }
}
\DeclareDocumentCommand\setip{ g g }{%
        \IfNoValueTF {#1} {\mathcal{I}^{+}} {
            \IfNoValueTF {#2} {\mathcal{I}^{+(#1)}}{\mathcal{I}^{+(#1)}_{#2}}
        }
}
\DeclareDocumentCommand\setin{ g g }{%
        \IfNoValueTF {#1} {\mathcal{I}^{-}} {
            \IfNoValueTF {#2} {\mathcal{I}^{-(#1)}}{\mathcal{I}^{-(#1)}_{#2}}
        }
}
\DeclareDocumentCommand\tseti{ g g }{%
        \IfNoValueTF {#1} {\tilde{\mathcal{I}}} {
            \IfNoValueTF {#2} {\tilde{\mathcal{I}}^{(#1)}}{\tilde{\mathcal{I}}^{(#1)}_{#2}}
        }
}
\DeclareDocumentCommand\tz{ g g }{%
        \IfNoValueTF {#1} {\tilde{z}} {
            \IfNoValueTF {#2} {\tilde{z}^{(#1)}}{\tilde{z}^{(#1)}_{#2}}
        }
}
\DeclareDocumentCommand\f{ g g }{%
        \IfNoValueTF {#1} {f} {
            \IfNoValueTF {#2} {f^{(#1)}}{f^{(#1)}_{#2}}
        }
}
\DeclareDocumentCommand\lf{ g g }{%
        \IfNoValueTF {#1} {\underline{f}} {
            \IfNoValueTF {#2} {\underline{f}^{(#1)}}{\underline{f}^{(#1)}_{#2}}
        }
}
\newcommand{\ReLU}{\mathrm{ReLU}}
\def\eqref#1{(\ref{#1})}
\def\1{\bm{1}}
\DeclareMathAlphabet{\mathsfit}{\encodingdefault}{\sfdefault}{m}{sl}
\SetMathAlphabet{\mathsfit}{bold}{\encodingdefault}{\sfdefault}{bx}{n}
\def\gC{{\mathcal{C}}}
\newcommand{\R}{\mathbb{R}}
\title{General Cutting Planes for Bound-Propagation-Based Neural Network Verification}
\author{%
  Huan Zhang\textsuperscript{*,1} \quad Shiqi Wang\textsuperscript{*,2} \quad Kaidi Xu\textsuperscript{*,3} \vspace{3pt}\\
  \textbf{Linyi Li\textsuperscript{4} \quad \textbf{Bo Li\textsuperscript{4}} \quad Suman Jana\textsuperscript{2} \quad Cho-Jui Hsieh\textsuperscript{5} \quad J.\ Zico Kolter\textsuperscript{1,6}} \vspace{3pt}\\
  \textsuperscript{1}CMU \quad \textsuperscript{2}Columbia University \quad \textsuperscript{3}Drexel University \quad \textsuperscript{4}UIUC \quad \textsuperscript{5}UCLA \quad \textsuperscript{6}Bosch Center for AI \vspace{3pt}\\
  \texttt{huan@huan-zhang.com \enskip \quad sw3215@columbia.edu \enskip kx46@drexel.edu} \\ 
  \texttt{linyi2@illinois.edu \ lbo@illinois.edu \ suman@cs.columbia.edu} \\
  \texttt{chohsieh@cs.ucla.edu \ zkolter@cs.cmu.edu} \vspace{5pt}\\
  * \textit{Equal Contribution}\\
}
\begin{document}

\maketitle

\begin{abstract}
Bound propagation methods, when combined with branch and bound, are among the most effective methods to formally verify properties of deep neural networks such as correctness, robustness, and safety. 
However, existing works cannot handle the \emph{general} form of cutting plane constraints widely accepted in traditional solvers, which are crucial for strengthening verifiers with tightened convex relaxations.
In this paper, we generalize the bound propagation procedure to allow the addition of arbitrary cutting plane constraints, including those involving relaxed integer variables that do not appear in existing bound propagation formulations. Our generalized bound propagation method, \ourmethod, opens up the opportunity to apply \textit{\textbf{g}eneral \textbf{c}utting \textbf{p}lane methods} for neural network verification while benefiting from the efficiency and GPU acceleration of bound propagation methods. As a case study, we investigate the use of cutting planes generated by off-the-shelf mixed integer programming (MIP) solver. 
We find that MIP solvers can generate high-quality cutting planes for strengthening bound-propagation-based verifiers using our new formulation. Since the branching-focused bound propagation procedure and the cutting-plane-focused MIP solver can run in parallel utilizing different types of hardware (GPUs and CPUs), their combination can quickly explore a large number of branches with strong cutting planes, leading to strong verification performance. Experiments demonstrate that our method is the first verifier that can \emph{completely} solve the \texttt{oval20} benchmark and verify \emph{twice as many} instances on the \texttt{oval21} benchmark compared to the best tool in VNN-COMP 2021, and also noticeably outperforms state-of-the-art verifiers on a wide range of benchmarks. GCP-CROWN is part of the \href{http://abcrown.org}{$\alpha,\!\beta$\texttt{-CROWN}} verifier, \textbf{the VNN-COMP 2022 winner}. Code is available at \textcolor{blue}{\url{http://PaperCode.cc/GCP-CROWN}}.

\end{abstract}

\section{Introduction}

Neural network (NN) verification aims to formally prove or disprove certain properties (e.g., correctness and safety properties) of a NN under a certain set of inputs.  These methods can provide worst-case performance guarantees of a NN, and have been applied to mission-critical applications that involve neural networks, such as automatic aircraft control~\citep{katz2019marabou,bak2020cav}, learning-enabled cyber-physical systems~\citep{tran2020nnv}, and NN based algorithms in an operating system~\cite{tan2021building}.


The NN verification problem is generally NP-complete~\citep{katz2017reluplex}. For piece-wise linear networks, it can be encoded as a mixed integer programming (MIP)~\citep{tjeng2017evaluating} problem with the non-linear ReLU neurons described by binary variables. Thus, fundamentally, the NN verification problem can be solved using the branch and bound (BaB)~\citep{bunel2018unified} method similar to generic MIP solvers, by branching some binary variables and relaxing the rest into a convex problem such as linear programming (LP) to obtain bounds on the objective. Although early neural network verifiers relied on off-the-shelf CPU-based LP solvers~\citep{lu2019neural,bunel2020branch} for bounding in BaB, LP solvers do not scale well to large NNs.  Thus, many recent verifiers are instead based on efficient and GPU-accelerated algorithms customized to NN verification, \revisedtext{such as bound propagation methods~\citep{xu2020fast,wang2021beta},  Lagrangian decomposition methods~\citep{Bunel2020,de2021improved} and others~\citep{DePalma2021,chen2021deepsplit}}.
Bound propagation methods, presented in a few different formulations~\citep{wong2018provable,dvijotham2018dual,wang2018formal,zhang2018efficient,singh2019abstract,henriksen2020efficient}, empower state-of-the-art NN verifiers such as $\alpha,\!\beta$-CROWN~\citep{zhang2018efficient,xu2020fast,wang2021beta} and VeriNet~\citep{bak2021second}, and can achieve two to three orders of magnitudes speedup compared to solving the NN verification problem using an off-the-shelf solver directly~\citep{wang2021beta}, especially on large networks.


Despite the success of existing NN verifiers, we experimentally find that state-of-the-art NN verifiers may timeout on certain hard instances which a generic MIP solver can solve relatively quickly, sometimes even without branching. Compared to an MIP solver, a crucial factor missing in most scalable NN verifiers is the ability to efficiently generate and solve general cutting planes (or ``cuts''). In generic MIP solvers, cutting planes are essential to strengthen the convex relaxation,  so that much less branching is required. Advanced cutting planes are among the most important factors in modern MIP solvers~\citep{bixby2007progress}; they can strengthen the convex relaxation without removing any valid integer solution from the MIP formulation. In the setting of NN verification, cutting planes reflects complex intra-layer and inter-layer dependencies between multiple neurons, which cannot be easily captured by existing bound propagation methods with single neuron relaxations~\citep{salman2019convex}. This motivates us to seek the combination of efficient bound propagation method with effective cutting planes to further increase the power of NN verifiers.

A few key factors make the inclusion of \emph{general} cutting planes in NN verifiers quite challenging. First, existing efficient bound propagation frameworks such as CROWN~\citep{zhang2018efficient} and $\beta$-CROWN~\citep{wang2021beta} cannot solve general cutting plane constraints that may involve variables across \emph{different layers} in the MIP formulation. Particularly, these frameworks do not explicitly include the \emph{integer variables} in the MIP formulation that are crucial when encoding many classical strong cutting planes, such as Gomory cuts and mixed integer rounding (MIR) cuts. Furthermore, although some existing works~\citep{singh2019beyond,tjandraatmadja2020convex,muller2021precise} enhanced the basic convex relaxation used in NN verification (such as the Planet relaxation \citep{ehlers2017formal}), these enhanced relaxations involve only one or a few neurons in a single layer or two adjacent layers, and are not general enough. In addition, an LP solver is often required to handle these additional cutting plane constraints~\citep{muller2021precise}, for which the efficient and GPU-accelerated bound propagation cannot be used, so the use of these tighter relaxations may not always bring improvements.


In this paper, we achieve major progress in using general cutting planes in bound propagation based NN verifiers. To mitigate the challenge of efficiently solving general cuts, \textbf{our first contribution} is to generalize existing bound propagation methods to their most general form, enabling constraints involving variables from neurons of any layer as well as integer variables that encode the status of a ReLU neuron. This allows us to consider any cuts during bound propagation without relying on a slow LP solver, and opens up the opportunity for using advanced cutting plane techniques efficiently for the NN verification problem. 
\textbf{Our second contribution} involves combining a cutting-plane-focused, off-the-shelf MIP solver with our GPU-accelerated, branching-focused bound propagation method capable of handling general cuts. 
We entirely disable branching in the MIP solver and use it only for generating high quality cutting planes not restricting to neurons within adjacent layers. Although an MIP solver often cannot verify large neural networks, we find that they can generate high quality cutting planes within a short time, significantly helping bound propagation to achieve better bounds.

Our experiments show that general cutting planes can bring significant improvements to NN verifiers: we are the first verifier that \emph{completely} solves \emph{all} instances in the \texttt{oval20} benchmark, with an average time of \emph{less than 5 seconds} per instance; on the even harder \texttt{oval21} benchmark in VNN-COMP 2021~\citep{bak2021second}, we can verify \emph{twice as many instances} compared to the competition winner. We also outperform existing state-of-the-art bound-propagation-based methods including those using multi-neuron relaxations~\citep{ferrari2022complete} (a limited form of cutting planes).


\section{Background}
\vspace{-1em}
\paragraph{The NN verification problem}
\label{subsec:background_problem}
We consider the verification problem for an $L$-layer ReLU-based Neural Network (NN) with inputs $\hx{0} := \x \in\R^{d_0}$, weights $\W{i} \in \R^{d_i \times d_{i-1}}$, and biases $\bias{i} \in \R^{d_i}$ ($i \in \{1, \cdots, L\}$). 
We can get the NN outputs $f(\x) = \x{L} \in\R^{d_L}$ by sequentially propagating the input $\x$ through affine layers with $\x{i}=\W{i} \hx{i-1} + \bias{i}$ and ReLU layer with $\hx{i} = \ReLU(\x{i})$. We also let scalars $\hx{i}{j}$ and $\x{i}{j}$ denote the post-activation and pre-activation, respectively, of $j$-th ReLU neuron in $i$-th layer. Throughout the paper, we use bold symbols to denote vectors (e.g., $\x{i}$) and regular symbols to denote scalars (e.g., $\x{i}{j}$  is the $j$-th element of $\x{i}$). We use the shorthand $[N]$ to denote $\{1, \cdots, N\}$, and $\W{i}{:,j}$ is the $j$-th column of $\W{i}$.

Commonly, the input $\x$ is bounded within a perturbation set $\gC$ (such as an $\ell_p$ norm ball) and the verification specification defines a property of the output $f(\x)$ that should hold for any $\x \in \gC$, e.g., whether the true label's logit $f_y(\x)$ will be always larger than another label's logit $f_j(\x)$, (i.e., checking if $f_y(\x)-f_j(\x)$ is always positive). Since we can append the verification specification (such as a linear function on neural network output) as an additional layer of the network, canonically, the NN verification problem requires one to solve the following one-dimensional ($d_L=1$) optimization objective on $f(\x)$:
\begin{equation}
    f^* = \min_{\x} f(\x), \forall \x\in \gC
    \label{eq:verification_definition}
\end{equation}
with the relevant property defined to be proven if the optimal solution $f^*\geq0$. Throughout this work, we consider the $\ell_\infty$ norm ball $\gC := \{ \x: \| \x - \x_0 \|_\infty \leq \epsilon \}$ where $\x_0$ is a predefined constant (e.g., a clean input image),
although it is possible to extend to other norms or specifications~\citep{qin2019verification,xu2020automatic}.

\paragraph{The MIP and LP formulation for NN verification}
\label{sec:milp_lp_formulation}
The mixed integer programming (MIP) formulation is the root of many NN verification algorithms. This formulation uses binary variables $\z$ to encode the non-linear ReLU neurons to make the non-convex optimization problem~\eqref{eq:verification_definition} tractable. Additionally, we assume that we know sound pre-activation bounds $\bl{i} \leq \x{i} \leq \bu{i}$ for $\x \in \gC$ which can be obtained via cheap bound propagation methods such as IBP~\citep{gowal2018effectiveness} or CROWN~\citep{zhang2018efficient}. Then ReLU neurons for each layer $i$ can be classified into three classes~\citep{wong2018provable}, namely ``active'' ($\setip{i}$), ``inactive'' ($\setin{i}$) and ``unstable'' ($\seti{i}$) neurons, respectively:
\begin{equation*}
    \setip{i} := \{j : \ \bl{i}{j} \geq 0 \} ; \quad
    \setin{i} := \{j : \ \bu{i}{j} \leq 0 \} ; \quad
    \seti{i} := \{j : \ \bl{i}{j} \leq 0, \ \bu{i}{j} \geq 0 \}
\end{equation*}
Based on the definition of ReLU, activate and inactive neurons are linear functions, so only unstable neurons require binary encoding. The \emph{MIP formulation} of~\eqref{eq:verification_definition} is:
\begingroup
\allowdisplaybreaks
\begin{align}
    f^*&=\min_{\x, \hx, \z} f(\x) \qquad \text{s.t.}  \ f(\x)=\x{L}; \quad \hx{0} = \x; \quad \x\in\mathcal{C}; \label{eq:mip_output} \\
\x{i}&=\W{i} \hx{i-1} + \bias{i}; \ \quad i \in [L], \label{eq:mip_layer} \\
\hx{i}{j} &\geq 0; \quad j \in \seti{i}, i \in [L\!-\!1] \label{eq:mip_unstab1}\\ 
\hx{i}{j} &\geq \x{i}{j}; \quad j \in \seti{i}, i \in [L\!-\!1] \label{eq:mip_unstab2}\\ 
\hx{i}{j} &\leq \bu{i}{j} \z{i}{j}; \quad j \in \seti{i}, i \in [L\!-\!1]\label{eq:mip_unstab3}\\
\hx{i}{j} &\leq \x{i}{j} - \bl{i}{j}(1-\z{i}{j}); \quad j \in \seti{i}, i \in [L\!-\!1] \label{eq:mip_unstab4}\\
\z{i}{j} & \in\{0,1\}; \quad j \in \setin{i}, i \in [L\!-\!1] \label{eq:mip_integer} \\
\hx{i}{j} &= \x{i}{j}; \quad j \in \setip{i}, i \in [L\!-\!1]  \label{eq:mip_active}\\
\hx{i}{j} &= 0; \quad j \in \setin{i}, i \in [L\!-\!1] \label{eq:mip_inactive}
\end{align}
\endgroup
Since a MIP problem is slow or intractable to solve, it is commonly relaxed as a 
When the integer variables are relaxed to continuous ones, we obtain the \emph{LP relaxation} of NN verification problem:
\begin{align}
    f^*_\text{LP}&=\min_{\x, \hx, \z} f(\x) \nonumber \\ 
\text{s.t.} \ \eqref{eq:mip_layer}, \eqref{eq:mip_output}, \eqref{eq:mip_unstab1}, \eqref{eq:mip_unstab2}, \eqref{eq:mip_unstab3}, \eqref{eq:mip_unstab4}, \eqref{eq:mip_active}, & \eqref{eq:mip_inactive}, \quad 0 \leq \z{i}{j} \leq 1; \quad j \in \seti{i}, \ i \in [L\!-\!1] \label{eq:lp_integer}
\end{align}
The ReLU constraints involving $z$ is often projected out, leading to the well-known \emph{Planet relaxation} used in many NN verifiers, replacing \eqref{eq:mip_unstab3}, \eqref{eq:mip_unstab4} and \eqref{eq:mip_integer} with a single constraint to get an equivalent LP:
\begin{align}
        f^*_\text{LP}&=\min_{\x, \hx, \z} f(\x) \nonumber \\ 
\text{s.t.} \ \eqref{eq:mip_layer}, \eqref{eq:mip_output}, \eqref{eq:mip_unstab1}, \eqref{eq:mip_unstab2}, \eqref{eq:mip_active}, \eqref{eq:mip_inactive}, \quad \hx{i}{j} & \leq  \frac{\bu{i}{j}}{\bu{i}{j} - \bl{i}{j}} ( \x{i}{j} - \bl{i}{j}); \ j \in \seti{i}, \ i \in [L\!-\!1] \label{eq:planet}
\end{align}

Due to the relaxations, the objective of the LP formulation is always a lower bound of the MIP formulation: $f^*_\text{LP} \leq f^*$. A verifier using this formulation is incomplete: if $f^*_\text{LP} \geq 0$, then $f^* \geq 0$ and the property is verified; otherwise, we cannot conclude the sign of $f^*$ so the verifier must return ``unknown''. Branch and bound can be used to improve the lower bound and achieve completeness~\citep{bunel2018unified,wang2021beta}. However, in this paper, we work on an orthogonal direction of strengthening the LP formulation by adding cutting planes to obtain larger bounds.

\paragraph{Bound propagation methods}
\label{sec:background_bound_propagation}

Instead of solving the LP formulation directly using a LP solver, bound propagation methods aims to quickly give a lower bound for $f^*_\text{LP}$. For example, CROWN~\citep{zhang2018efficient} and $\beta$-CROWN~\citep{wang2021beta} propagate a sound linear lower bound backwards for $f_L(\x)$ with respect to each intermediate layer. For example, suppose we know
\begin{equation}
\min_{\x \in \gC} f_L(\x) \geq \min_{\x \in \gC} {\a{i}}^\top \hx{i} + \c{i}
\label{eq:bound_prop}
\end{equation}
With $i=L-1$ the above is trivially hold with $\a{L-1}=\W{L}$, $c^{(L-1)}=\bias{L}$. In bound propagation methods, a propagation rule propagates an inequality~\eqref{eq:bound_prop} through a previous layer $\hx{i}:=\ReLU(\W{i} \hx{i-1} + \bias{i})$ to obtain a \emph{sound} inequality with respect to $\hx{i-1}$:
\[
\min_{\x \in \gC} f_L(\x) \geq \min_{\x \in \gC} {\a{i-1}}^\top \hx{i-1} + \c{i-1}
\]
Here $\a{i-1}$, $c^{(i-1)}$ can be calculated in close-form via ${\a{i}}$, $c^{(i)}$, $\W{i}$, $\bias{i}$, $\bl{i}$ and $\bu{i}$ such that the bound still holds (see Lemma 1 in~\citep{wang2021beta}). Applying the procedure repeatedly will eventually reach the input layer:
\begin{equation}
\min_{\x \in \gC} f_L(\x) \geq \min_{\x \in \gC} {\a{0}}^\top \x + \c{0}
\label{eq:bound_prop_final}
\end{equation}
The minimization on linear layer can be solved easily when $\gC$ is a $\ell_p$ norm ball to obtain a valid lower bound of $f^*$. Since the bounds propagate layer-by-layer, this process can be implemented efficiently on GPUs~\citep{xu2020automatic} without relying on a slow LP solver, which greatly improves the scalability and solving time. Additionally, it is often used to obtain intermediate layer bounds $\bl{i}$ and and $\bu{i}$ required for the MIP formulation \eqref{eq:mip_unstab3}\eqref{eq:mip_unstab4}, by treating each $\x{i}{j}$ as the output neuron. The bound propagation rule can either be derived in primal space~\citep{zhang2018efficient}, dual space~\citep{wong2018provable} or abstract interpretations~\citep{singh2019abstract}. In Sec.\ref{sec:general_cut}, we will discuss the our bound propagation procedure with general cutting plane constraints.

\section{Neural Network Verification with General Cutting Planes}
\subsection{\ourmethod: General Cutting Planes in Bound Propagation}
In this section, we generalize existing bound propagation method to handle general cutting plane constraints. Our goal is to derive a bound propagation rule \revisedtext{similar to CROWN and $\beta$-CROWN discussed} in Section~\ref{sec:background_bound_propagation}, however considering additional constraints among any variables within the LP relaxation. To achieve this, we first derive the dual problem of the LP; inspired by the dual formulation, we derive the bound prorogation rule in a layer by layer manner that takes all cutting plane constraints into consideration. \revisedtext{The derivation process is inspired by~\citep{wong2018provable,salman2019convex} and~\citep{wang2021beta}}.
\label{sec:general_cut}
\paragraph{LP relaxation with cutting planes.} In this section, we derive the bound propagation procedure under the presence of general cutting plane constraints. A cutting plane is a constraint involving any variables $\x{i}$ (pre-activation), $\hx{i}$ (post-activation), $\z{i}$ (ReLU indicators) from any layer $i$:
\[
\sum_{i=1}^{L-1} \left (\xcut{i}^\top \x{i} + \hxcut{i}^\top \hx{i} + \zcut{i}^\top \z{i} \right ) \leq d
\]
Here $\xcut{i}$, $\hxcut{i}$ and $\zcut{i}$ are coefficients for this cut constraint. The difference between a valid cutting plane and an arbitrary constraint is that a valid cutting plane should not remove any valid integer solution from the MIP formulation. Our new bound propagation procedure can work for any constraints, although in this work we focus on studying the impacts of cutting planes. When there are $N$ cutting planes, we write them in a matrix form:
\begin{equation}
\sum_{i=1}^{L-1} \left (\Xcut{i} \x{i} + \hXcut{i} \hx{i} + \Zcut{i} \z{i} \right ) \leq \bm{d}
\label{eq:cuts}
\end{equation}
where $\Xcut{i}, \hXcut{i}, \Zcut{i} \in \R^{N \times d_i}$. The LP relaxation with all cutting planes is:
\begin{align}
    f^*_\text{LP-cut}&=\min_{\x, \hx, \z} f(x) \nonumber \\ 
\text{s.t.} \ \eqref{eq:mip_layer}, \eqref{eq:mip_output}, \eqref{eq:mip_unstab1}, \eqref{eq:mip_unstab2}, \eqref{eq:mip_unstab3}, \eqref{eq:mip_unstab4},  & \eqref{eq:mip_active}, \eqref{eq:mip_inactive}, \quad 0 \leq \z{i}{j} \leq 1; \quad j \in \seti{i}, \ i \in [L\!-\!1] \nonumber \\
\sum_{i=1}^{L-1}  \Big (\Xcut{i} \x{i} & + \hXcut{i} \hx{i}+ \Zcut{i} \z{i} \Big ) \leq \bm{d} \label{eq:lp_with_cuts}
\end{align}
Since an additional constraint is added, $f^*_\text{LP-cut} \geq f^*_\text{LP}$ and we get closer to $f^*$. Unlike the original LP where each constraint only contains variables from two consecutive layers, our general cutting plane constraint may involve any variable from any layer in a single constraint.
\paragraph{The dual problem with cutting planes}
We first show the dual problem for the above LP. The dual problem we consider here is different from existing works in two ways: first, we have constraints with integer variables to support potential cutting planes on $\z$. Additionally, we have cutting planes constraints that may involve variables in \emph{any} layer, so the dual in previous works such as~\citep{wong2018provable} cannot be directly reused. Our dual problem is given below (derivation details in Appendix~\ref{sec:app_dual_lp}):
\begin{align*}
    f^*_\text{LP-cut}&=\max_{\substack{\nuvar, \muvar\geq 0, \tauvar\geq 0\\ \gammavar \geq 0, \pivar \geq 0, \betavar \geq 0}} -\epsilon \| \nuvar{1}^\top \W{1} \x_0 \|_1  - \betavar^\top \bm{d} -\sum_{i=1}^{L} \nuvar{i}^\top \bias{i} \\
    & + \sum_{i=1}^{L-1} \sum_{j \in \seti{i}} \Big [ \pivar{i}{j} \bl{i}{j}  -\ReLU(\bu{i}{j}\gammavar{i}{j} + \bl{i}{j}\pivar{i}{j} - \betavar^\top \Zcut{i}{:,j}) \Big ] \\
\text{s.t.} \enskip \nuvar{L} &= -1; \enskip \text{and for each $i \in [L-1]$}: \\
\nuvar{i}{j} &= \nuvar{i+1}^\top \W{i+1}{:,j} - \betavar^\top (\Xcut{i}{:,j} + \hXcut{i}{:,j}); \ j \in \setip{i} \\
\nuvar{i}{j} &= -\betavar^\top \Xcut{i}{:,j}; \ j \in \setin{i} \\
& \text{and for each $j \in \seti{i}$ the two equalities below hold:} \\
\nuvar{i}{j} &= \pivar{i}{j} - \tauvar{i}{j} - \betavar^\top \Xcut{i}{:,j}; 
\quad \Big (\pivar{i}{j} + \gammavar{i}{j} \Big ) - \Big (\muvar{i}{j} + \tauvar{i}{j} \Big ) = \nuvar{i+1}^\top \W{i+1}{:,j} - \betavar^\top \hXcut{i}{:,j}
\end{align*}
Instead of solving the dual problem exactly, we use it to obtain a lower bound of $f^*_\text{LP-cut}$. Intuitively, due to the definition of due problem, any valid setting of dual variables leads to a lower bound of $f^*_\text{LP-cut}$. Informally, starting from $\nuvar{L} = -1$, by applying the constraints in this dual formulation, we can compute $\nuvar{L-1}, \nuvar{L-2}, \cdots$ until $\nuvar{1}$. The final objective is a function of $\nuvar{i}$, $i \in [N]$ and other dual variables. Precisely, our \ourmethod bound propagation procedure with general cutting plane constraint is presented in the theorem below (proof in Appendix~\ref{sec:app_dual_lp}):
\begin{theorem}[Bound propagation with general cutting planes]
\label{thm:bound_propagation}
Given any optimizable parameters $0 \leq \alphavar{i}{j} \leq 1$ and $\betavar \geq 0$,
$f^*_\text{LP-cut}$ is lower bounded by the following objective function, ${\pivar{i}{j}}^*$ is a function of $\Zcut{i}{:,j}$:
\begin{align*}
g(\alphavar, \betavar) &= -\epsilon \| \nuvar{1}^\top \W{1} \x_0 \|_1 - \sum_{i=1}^{L} \nuvar{i}^\top \bias{i} - \betavar^\top \bm{d} + \sum_{i=1}^{L-1} \sum_{j \in \seti{i}} \hfunc{i}{j}(\betavar)
\end{align*}
where variables $\nuvar{i}$ are obtained by propagating $\nuvar{L} = -1$ throughout all $i \in [L\!-\!1]$:
\begin{align*}
\nuvar{i}{j} &= \nuvar{i+1}^\top \W{i+1}{:,j} - \betavar^\top (\Xcut{i}{:,j} + \hXcut{i}{:,j}), \enskip j \in \setip{i} \\
\nuvar{i}{j} &= -\betavar^\top \Xcut{i}{:,j}, \enskip j \in \setin{i}  \\
\nuvar{i}{j} &= {\pivar{i}{j}}^* - \alphavar{i}{j} [\hnuvar{i}{j}]_- - \betavar^\top \Xcut{i}{:,j}, \enskip j \in \seti{i}
\end{align*}
Here ${\hnuvar{i}{j}}$, ${\pivar{i}{j}}^*$ and $\hfunc{i}{j}(\betavar)$ are defined for each unstable neuron $j \in \seti{i}$.
\begin{align*}
{\hnuvar{i}{j}} &:= \nuvar{i+1}^\top \W{i+1}{:,j} - \betavar^\top \hXcut{i}{:,j} \\
{\pivar{i}{j}}^* &= \max \left ( \min \left (\frac{\bu{i}{j} [\hnuvar{i}{j}]_+ +  \betavar^\top \Zcut{i}{:,j}}{\bu{i}{j} - \bl{i}{j}}, [\hnuvar{i}{j}]_+ \right ), 0 \right )
\end{align*}
\[
\hfunc{i}{j}(\betavar) = \begin{cases}
\bl{i}{j} {\pivar{i}{j}}^* & \text{if \enskip $\bl{i}{j}[\hnuvar{i}{j}]_+ \leq \betavar^\top \Zcut{i}{:,j} \leq \bu{i}{j}[\hnuvar{i}{j}]_+ $} \\
0 & \text{if \enskip $\betavar^\top \Zcut{i}{:,j} \geq \bu{i}{j}[\hnuvar{i}{j}]_+ $} \\
\betavar^\top \Zcut{i}{:,j} & \text{if \enskip $\betavar^\top \Zcut{i}{:,j} \leq \bl{i}{j} [\hnuvar{i}{j}]_+$}
\end{cases}
\]
\end{theorem}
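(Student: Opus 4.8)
The plan is to prove the claimed bound by \emph{weak duality}, using the dual program displayed immediately before the theorem. Since $f^*_\text{LP-cut}$ is the optimum of a minimization LP, its dual is a maximization, and \emph{every} dual-feasible assignment of $(\nuvar, \muvar, \tauvar, \gammavar, \pivar, \betavar)$ obeying the stated sign constraints yields a valid lower bound $\text{(dual objective)} \leq f^*_\text{LP-cut}$. The whole task therefore reduces to exhibiting one dual-feasible assignment, parameterized by the free quantities $0 \leq \alphavar{i}{j} \leq 1$ and $\betavar \geq 0$, and verifying that the dual objective evaluated there equals $g(\alphavar, \betavar)$.

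First I would construct this assignment by back-substitution. I initialize $\nuvar{L} = -1$ as required, and propagate backward through $i = L-1, \dots, 1$. For an active neuron $j \in \setip{i}$ and an inactive neuron $j \in \setin{i}$, the corresponding dual equality constraints pin $\nuvar{i}{j}$ down uniquely, and these coincide with the two closed forms in the theorem. For an unstable neuron $j \in \seti{i}$ the constraints leave genuine freedom: writing $\hnuvar{i}{j} := \nuvar{i+1}^\top \W{i+1}{:,j} - \betavar^\top \hXcut{i}{:,j}$, I would set $\tauvar{i}{j} = \alphavar{i}{j}[\hnuvar{i}{j}]_-$, which is nonnegative precisely because $0 \leq \alphavar{i}{j} \leq 1$ and (with the convention $[x]_- = \max(-x,0) \geq 0$) one has $[\hnuvar{i}{j}]_- \geq 0$. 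Substituting into the dual equality $\nuvar{i}{j} = \pivar{i}{j} - \tauvar{i}{j} - \betavar^\top \Xcut{i}{:,j}$ then reproduces the theorem's recurrence, once $\pivar{i}{j}$ is fixed to the value ${\pivar{i}{j}}^*$ determined next.

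Second I would carry out the per-neuron inner maximization that pins down ${\pivar{i}{j}}^*$ and $\hfunc{i}{j}(\betavar)$. For each unstable neuron, with $\hnuvar{i}{j}$ and $\betavar$ held fixed, the only dual objective contribution is $\pivar{i}{j}\bl{i}{j} - \ReLU\!\big(\bu{i}{j}\gammavar{i}{j} + \bl{i}{j}\pivar{i}{j} - \betavar^\top \Zcut{i}{:,j}\big)$, to be maximized over $\pivar{i}{j}, \gammavar{i}{j}, \muvar{i}{j} \geq 0$ subject to the linking equality $(\pivar{i}{j} + \gammavar{i}{j}) - (\muvar{i}{j} + \tauvar{i}{j}) = \hnuvar{i}{j}$. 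Because $\muvar{i}{j}$ enters only to slacken $\gammavar{i}{j} \geq 0$ and the ReLU is monotone in $\gammavar{i}{j}$, I would drive $\muvar{i}{j}$ to its lower bound, eliminating both $\muvar{i}{j}$ and $\gammavar{i}{j}$ via $\gammavar{i}{j} = [\hnuvar{i}{j} + \tauvar{i}{j} - \pivar{i}{j}]_+$ and reducing the problem to a scalar piecewise-linear maximization in $\pivar{i}{j}$. Solving it, with the ReLU kink and the sign of $\hnuvar{i}{j}$ inducing the case split, yields the clipped optimizer ${\pivar{i}{j}}^*$ and the three-branch value $\hfunc{i}{j}(\betavar)$, whose breakpoints are exactly $\bl{i}{j}[\hnuvar{i}{j}]_+$ and $\bu{i}{j}[\hnuvar{i}{j}]_+$ measured against $\betavar^\top \Zcut{i}{:,j}$. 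Collecting the per-neuron maxima together with the $-\epsilon\|\nuvar{1}^\top \W{1}\x_0\|_1$, $-\sum_i \nuvar{i}^\top\bias{i}$, and $-\betavar^\top\bm{d}$ terms then reassembles $g(\alphavar, \betavar)$ exactly.

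The main obstacle is this per-neuron optimization. The difficulty is twofold: the ReLU nonlinearity inside the dual objective makes the subproblem piecewise-linear rather than linear, and the cut coefficients $\Zcut{i}{:,j}$ shift the breakpoints away from the standard $\beta$-CROWN analysis, so I must verify case by case that the proposed ${\pivar{i}{j}}^*$ is genuinely optimal—e.g.\ by checking the sign of the objective's one-sided derivatives at the clipping boundaries—and that no alternative feasible $(\gammavar{i}{j}, \muvar{i}{j})$ improves it. Once optimality is established in every regime, the claimed $g(\alphavar, \betavar)$ is exactly the dual objective at a dual-feasible point, so weak duality gives $g(\alphavar, \betavar) \leq f^*_\text{LP-cut}$ for every $\betavar \geq 0$ and every $\alphavar$ with $0 \leq \alphavar{i}{j} \leq 1$.
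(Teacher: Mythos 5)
Your proposal is correct and follows essentially the same route as the paper: weak duality on the cut-augmented LP, backward propagation of $\nuvar$ with the residual freedom in $\tauvar{i}{j}$ parameterized by $\alphavar{i}{j}$, and a per-unstable-neuron piecewise-linear maximization over $(\pivar{i}{j},\gammavar{i}{j})$ that produces ${\pivar{i}{j}}^*$ and the three-branch $\hfunc{i}{j}$ (the paper isolates exactly this scalar subproblem as Lemma~\ref{lemma:optimization_pi_gamma}). The only cosmetic difference is that the paper first pins down $\pivar{i}{j}+\gammavar{i}{j}=[\hnuvar{i}{j}]_+$ and $\muvar{i}{j}+\tauvar{i}{j}=[\hnuvar{i}{j}]_-$ via a complementarity observation before optimizing the split, whereas you eliminate $\muvar{i}{j}$ by monotonicity; both reduce to the same scalar problem with the same optimum.
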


\begin{equation*}
   {\pivar{i}{j}}^* is a function of \Zcut{i}{:,j}
\end{equation*}

Based on Theorem~\ref{thm:bound_propagation}, to obtain an lower bound of $f^*_\text{LP-cut}$, we start with any valid setting of $0 \leq \alphavar \leq 1$ and $\betavar \geq 0$ and $\nuvar{L} = -1$. According to the bound propagation rule, we can compute each $\nuvar{i}$, $i \in [L-1]$, in a layer by layer manner. Then objective $g(\alphavar, \betavar)$ can be evaluated based on all $\nuvar{i}$ to give an lower bound of $f^*_\text{LP-cut}$. Since any valid setting of $\alphavar$ and $\betavar$ lead to a valid lower bound, we can optimize $\alphavar$ and $\betavar$ using gradient ascent in a similar manner as in~\citep{xu2020fast,wang2021beta} to tighten this lower bound. The entire procedure can also run on GPU for great acceleration.

\paragraph{Connection to Convex Outer Adversarial Polytope}

In convex outer adversarial polytope~\citep{wong2018provable}, a bound propagation rule was developed in a similar manner in the dual space without considering cutting plane constraints, and is a special case of ours. We denote their bound propagation objective function as $g_\text{WK}$ which also contains optimizable parameters~$\bm{\alpha}_\text{WK}$.
\begin{proposition}
\label{prop:wk_bound}
Given the same input $\x$, perturbation set $\gC$, network weights, and $N$ cutting plane constraints,
\[
\max_{\alphavar, \betavar} g(\alphavar, \betavar) \geq \max_{\bm{\alpha}_\text{WK}} g_\text{WK} (\bm{\alpha}_\text{WK})
\]
\end{proposition}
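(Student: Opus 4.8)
The plan is to show that the Wong--Kolter objective $g_\text{WK}$ is exactly the special case of our objective obtained by switching off all cutting planes, i.e. by setting $\betavar = \vzero$, and to do so at identical pre-activation bounds $\bl{i}, \bu{i}$ for both methods (which is implicit in ``the same input, perturbation set, and weights''). Since $\betavar = \vzero$ is feasible ($\betavar \geq \vzero$), we immediately get $\max_{\alphavar, \betavar} g(\alphavar, \betavar) \geq \max_{\alphavar} g(\alphavar, \vzero)$, so it suffices to prove $\max_{\alphavar} g(\alphavar, \vzero) \geq \max_{\bm{\alpha}_\text{WK}} g_\text{WK}(\bm{\alpha}_\text{WK})$. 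In fact I would aim for the stronger statement that for every $\bm{\alpha}_\text{WK}$ there is a matching $\alphavar$ for which the two objectives coincide.

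First I would substitute $\betavar = \vzero$ into Theorem~\ref{thm:bound_propagation} and simplify. The term $\betavar^\top \bm{d}$ and every coefficient $\betavar^\top \Xcut{i}{:,j}$, $\betavar^\top \hXcut{i}{:,j}$, $\betavar^\top \Zcut{i}{:,j}$ vanish. Consequently $\hnuvar{i}{j}$ reduces to $\nuvar{i+1}^\top \W{i+1}{:,j}$; using $\bl{i}{j} \leq 0 \leq \bu{i}{j}$ for unstable neurons (so that $0 \le \tfrac{\bu{i}{j}}{\bu{i}{j}-\bl{i}{j}} \le 1$) the clipped quantity collapses to ${\pivar{i}{j}}^* = \tfrac{\bu{i}{j}}{\bu{i}{j}-\bl{i}{j}}[\hnuvar{i}{j}]_+$, and the first case of $\hfunc{i}{j}$ applies, giving $\hfunc{i}{j}(\vzero) = \tfrac{\bl{i}{j}\bu{i}{j}}{\bu{i}{j}-\bl{i}{j}}[\hnuvar{i}{j}]_+$. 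The propagation rule for unstable neurons becomes $\nuvar{i}{j} = \tfrac{\bu{i}{j}}{\bu{i}{j}-\bl{i}{j}}[\hnuvar{i}{j}]_+ - \alphavar{i}{j}[\hnuvar{i}{j}]_-$, while the active and inactive rules become $\nuvar{i}{j} = \nuvar{i+1}^\top \W{i+1}{:,j}$ for $j \in \setip{i}$ and $\nuvar{i}{j} = 0$ for $j \in \setin{i}$. This is exactly the backward ReLU relaxation of CROWN with a free lower slope $\alphavar{i}{j} \in [0,1]$ and no additional dual coupling.

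Next I would line this reduced objective up with the Wong--Kolter dual. Both bounds stem from the same Planet relaxation of the unstable ReLUs with no extra constraints, and the only free parameter in $g_\text{WK}$, namely $\bm{\alpha}_\text{WK}$, is the per-unstable-neuron slope of the lower-bounding line, living in $[0,1]$ just like $\alphavar$. I would define the identification $\alphavar{i}{j} := (\bm{\alpha}_\text{WK})^{(i)}_j$ and then verify, by induction on the layer index $i$ from $L$ down to $1$, that the propagated coefficients $\nuvar{i}$ coincide with the corresponding back-substituted quantities of the Wong--Kolter dual network, matching term by term the linear coefficients (the three propagation rules above) and the accumulated constant, i.e. $-\epsilon \|\nuvar{1}^\top \W{1} \x_0\|_1$, the bias terms $-\sum_{i} \nuvar{i}^\top \bias{i}$, and the ReLU intercepts $\sum_{i, j \in \seti{i}} \hfunc{i}{j}(\vzero)$. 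This yields $g(\alphavar, \vzero) = g_\text{WK}(\bm{\alpha}_\text{WK})$ for the matched parameters, and taking maxima over each side gives the claim.

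The main obstacle I expect is the bookkeeping in this last matching step, since the two formulations organize the dual of the ReLU relaxation differently: our ${\pivar{i}{j}}^*$ together with the three-case $\hfunc{i}{j}$ versus Wong--Kolter's single passed-back coefficient and bias. I must check with consistent sign conventions that the upper-relaxation slope $\tfrac{\bu{i}{j}}{\bu{i}{j}-\bl{i}{j}}$ and intercept $\tfrac{\bl{i}{j}\bu{i}{j}}{\bu{i}{j}-\bl{i}{j}}$ appear identically, that the $[\cdot]_+/[\cdot]_-$ split of the incoming coefficient corresponds exactly to Wong--Kolter's case split on the sign of the back-propagated coefficient, and that the two free-slope domains genuinely agree. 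Once this identification is confirmed for each neuron class, the inequality follows because the supremum over our strictly larger parameter set $(\alphavar, \betavar)$ dominates the supremum over the embedded Wong--Kolter subset $\{(\alphavar, \vzero)\}$.
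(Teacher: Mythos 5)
Your proposal is correct and follows essentially the same route as the paper's proof: set $\betavar=\vzero$, observe that ${\pivar{i}{j}}^*$ collapses to $\frac{\bu{i}{j}}{\bu{i}{j}-\bl{i}{j}}[\hnuvar{i}{j}]_+$ and $\hfunc{i}{j}$ to its first case so that the propagation rule coincides with the Wong--Kolter equations under the identification $\alphavar \leftrightarrow \bm{\alpha}_\text{WK}$, and then conclude because the maximum over the enlarged parameter set $(\alphavar,\betavar)$ dominates the maximum over the embedded slice $\{(\alphavar,\vzero)\}$. Your version simply spells out the clipping and case analysis that the paper states without detail.
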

\begin{proof}
In Theorem~\ref{thm:bound_propagation}, when all $\betavar$ are set to 0, then ${\pivar{i}{j}}^* = \frac{\bu{i}{j} [\hnuvar{i}{j}]_+}{\bu{i}{j} - \bl{i}{j}}$ and $\hfunc{i}{j}(\betavar)={\pivar{i}{j}}^* \bl{i}{j}$, we recover exactly the same bound propagation equations as in~\citep{wong2018provable}. However, since we allow the addition of cutting plane methods and we can maximize over the additional parameter $\betavar$, the objective given by our bound propagation is always at least as good as $g_\text{WK}$.
\end{proof}

\paragraph{Connection to CROWN-like bound propagation methods}
CROWN~\citep{zhang2018efficient} and $\alpha$-CROWN~\citep{xu2020fast} use the same bound propagation rule as~\citep{wong2018provable} so Proposition~\ref{prop:wk_bound} also applies, although they were derived from primal space without explicitly formulating the problem as a LP. Salman et al.\ \citep{salman2019convex} showed that many other bound propagation methods~\citep{singh2019abstract,wang2018efficient} are equivalent to or weaker than~\citep{wong2018provable}. 
Recently, Wang et al.\ \cite{wang2021beta} extends CROWN to $\beta$-CROWN to handle split constraints (e.g., $\x{i}{j} \geq 0$). It can be seen a special case as \ourmethod where all $\Xcut$, $\hXcut$ and $\Zcut$ matrices are zeros except:
\begin{align*}
\Xcut{i}{j,j} = 1, &j \in \setin{i} \quad & \text{for $\x{i}{j} \leq 0$ split}; \qquad
\Xcut{i}{j,j} = -1, &j \in \setip{i} \quad & \text{for $\x{i}{j} \geq 0$ split}
\end{align*}
\revisedtext{In addition, \citep{ferrari2022complete} encodes multi-neuron relaxations using sparse $\Xcut{i}$ and $\hXcut{i}$ and each cut contains a small number of neurons involving $\x{i}$ and $\hx{i}$ for the same layer $i$.}
Wang et al.\ \cite{wang2021beta} derived bound propagation rules from both the dual LP and the primal space with a Lagrangian without LP. However, in our case, it is not intuitive to derive bound propagation without LP due to the potential cutting planes on relaxed integer variables $\z$, which do not appear without the explicit LP formulation. \revisedtext{Furthermore, although we derived cutting planes for bound propagation methods, it is technically also possible to derive them using other bounding frameworks such as Lagrangian decomposition~\citep{Bunel2020}.}

\subsection{Branch-and-bound with \ourmethod and MIP Solver Generated Cuts}

To build a complete NN verifier, we follow the popular branch-and-bound (BaB) procedure~\citep{bunel2018unified,bunel2020branch} in state-of-the-art NN verifiers with GPU accelerated bound propagation method~\citep{Bunel2020,xu2020fast,DePalma2021,wang2021beta}, and our \ourmethod is used as the bounding procedure in BaB. We refer the readers to Appendix~\ref{sec:app_background} for a more detailed background on branch-and-bound. Having the efficient bound propagation procedure with general cutting plane constraints, we now need to find a good set of general cutting planes $\Xcut{i}, \hXcut{i}, \Zcut{i}$ to accelerate NN verification. Since \ourmethod can adopt any cutting planes, to fully exploit its power, we propose to use off-the-shelf MIP solvers to generate cutting planes and create an NN verifier combining GPU-accelerated bound propagation with strong cuts generated by a MIP solver. We make the bound propagation on GPU and the MIP solver on CPU run in parallel with cuts added on-the-fly, so the original strong performance of bound-propagation-based NN verifier will never be affected by a potentially slow MIP solver. This allows us to make full use of available computing resource (GPU + CPU).
The architecture of our verifier is shown in Fig~\ref{fig:nn-verifier-overview}.
\begin{wrapfigure}[24]{l}{0.45\textwidth}
  \begin{center}
    \includegraphics[width=\linewidth]{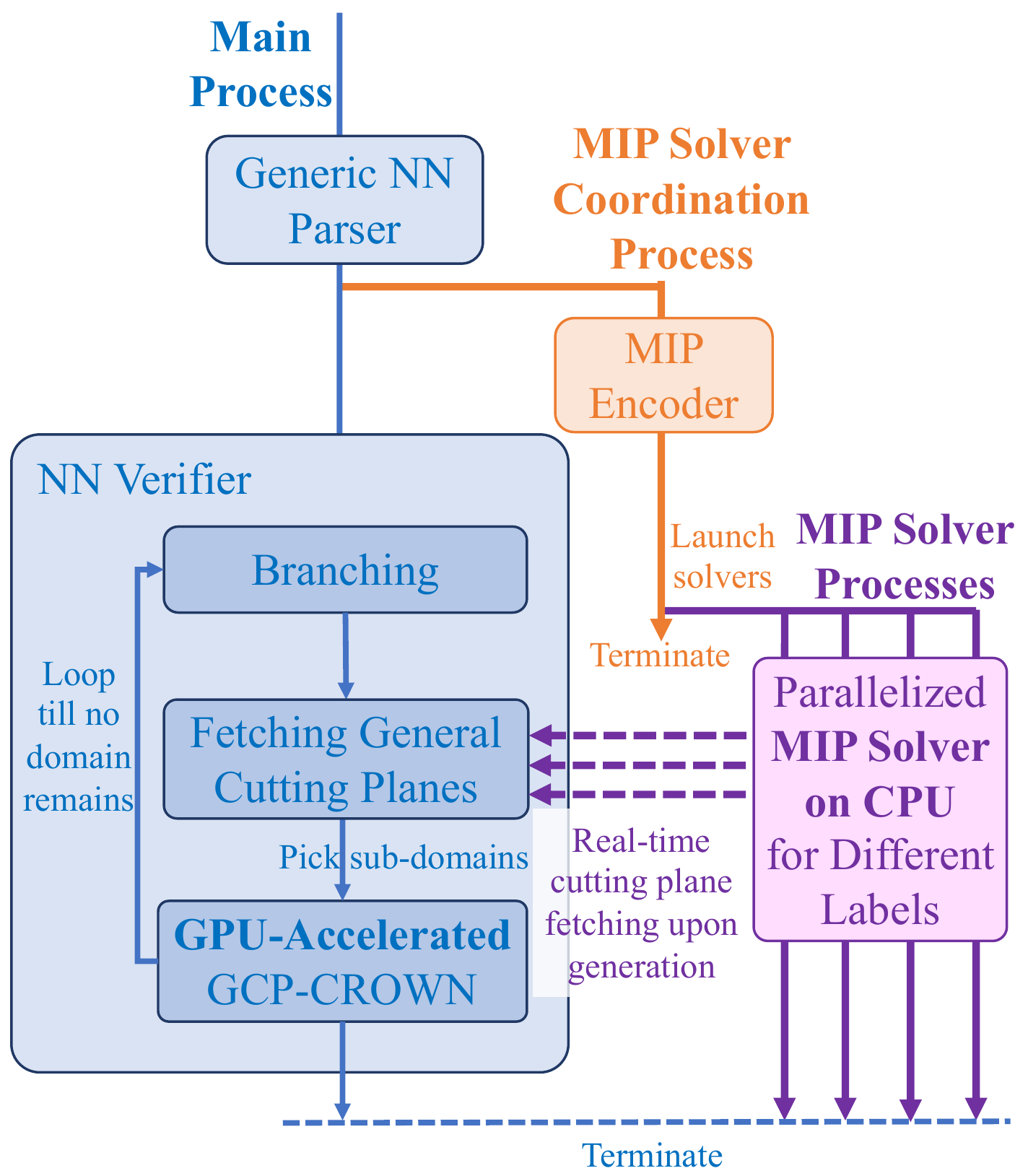}
  \end{center}
  \caption{Overview of our cutting-plane-enhanced, fully-parallelized NN verifier.}
  \label{fig:nn-verifier-overview}
\end{wrapfigure}
\textbf{MIP solvers for cutting plane generation} \,\, Generic MIP solvers such as \texttt{cplex}\cite{cplex} and \texttt{gurobi}\cite{gurobi} also apply a branch-and-bound strategy, conceptually similar to state-of-the-art NN verifiers. They often tend to be slower than specialized NN verifiers because MIP solvers rely on slower bounding procedures (e.g., Simplex or barrier method) and \revisedtext{cannot apply an GPU-accelerated method such as bound propagation or Lagrangian decomposition}. However, we still find that MIP solvers are a strong baseline when \emph{combined with tight intermediate layer bounds}. For example, in the \texttt{oval21} benchmark in Table~\ref{table:vnncomp-21}, $\alpha$-CROWN+MIP (MIP solver combined with tight intermediate layer bound computed by $\alpha$-CROWN) is able to solve 4 more instances compared to all other tools in the competition. Our investigation found that $\alpha$-CROWN+MIP explores much less branches than other state-of-the-art branch-and-bound based NN verifiers, however before branching starts, the MIP solver produces very effective cutting planes that can often verify an instance with little branching. MIP solvers is able to discover sophisticated cutting planes involving several NN layers reflecting complex correlations among neurons, while existing NN verifiers only exploit specific forms of constraints within same layer or between adjacent layers~\citep{tjandraatmadja2020convex,singh2019beyond,muller2021precise}. This motivates us to combine the strong cutting planes generated by MIP solvers with GPU-accelerated bound-propagation-based BaB.

In this work, we use \texttt{cplex} as the MIP solver for cutting plane generation since \texttt{gurobi} cannot export cuts. We entirely disable all branching features in \texttt{cplex} and make it focus on finding cutting planes only. These cutting planes are generated only for the root node of the BaB search tree so they are sound for any subdomains with neuron splits in BaB. We conduct branching using our generalized bound propagation procedure in Section~\eqref{sec:general_cut} with the cutting planes generated by \texttt{cplex}.



\paragraph{Fully-parallelized NN verifier design}
We design our verifier as shown in \Cref{fig:nn-verifier-overview}.
After parsing the NN under verification, we launch a separate process to encode the NN verification problem as MIP problem and start multiple MIP solvers, one for each target label to be verified.
At the same time, the main NN verifier process executes branch-and-bound without waiting for the MIP solver process. In each iteration of branch and bound, we query the MIP solving processes and fetch any newly generated cutting planes. If any cutting planes are produced, they are added as $\Xcut{i}, \hXcut{i}, \Zcut{i}$ in \ourmethod and tighten the bounds for subsequent branching and bounding. If no cutting planes are produced, \ourmethod reduces to $\beta$-CROWN~\citep{wang2021beta}. Since our verifier is based on strengthening the bounds in $\beta$-CROWN with sound cutting planes, it is also sound and complete.



\paragraph{Adjustments to existing branch and bound method.}
We implement \ourmethod into the $\alpha\!,\!\beta$-CROWN verifier~\citep{zhang2018efficient,xu2020automatic,wang2021beta}, the winning verifier in VNN-COMP 2021~\citep{bak2021second}, as the backbone for BaB with bound propagation. 
To better exploit the power of cutting planes under a fully parallel and asynchronous design, we made a key changes to the BaB procedure. When the number of BaB subdomains are greater than batch size, we rank the subdomains by their lower bounds and choose the \emph{easiest domains} with largest lower bounds first to verify with \ourmethod, unlike most existing verifiers which solve the worst domains first. We use such a order because the MIP solver generates cutting planes incrementally. Solving these easier subdomains tend to require no or fewer cutting planes, so we solve them at earlier stages where cutting planes have not been generated or are relatively weak. On the other hand, if we split worst subdomains first, the number of subdomains will grow quickly, and it can take a long time to verify these domains when stronger cuts become available later.
Under a similar rationale, when verifying a multi-class model and BaB needs to verify each target class label one by one, we start BaB from the easiest label first ($\alpha$-CROWN bound closest to 0), allowing the MIP solver to run longer for harder labels, generating stronger cuts for harder labels.





\begin{table*}[tb]
\centering
\caption{\footnotesize Average runtime and average number of branches on \texttt{oval20} benchmarks with 100 properties per model. Timeout is set to 3,600 seconds (consistent with other literature results). GCP-CROWN is the only method that can completely solve \emph{all} instances (0\% timeout) and the average time per-instance is less than 5 seconds on all three networks.
}
\vspace{5pt}
\label{table:average_complete}
\scriptsize
\setlength{\tabcolsep}{4.2pt}
\begin{adjustbox}{center}
\begin{threeparttable}[hbt]
\begin{tabular}{cccc|ccc|ccc}

    & \multicolumn{3}{ c }{CIFAR-10 Base} & \multicolumn{3}{ c }{CIFAR-10 Wide} & \multicolumn{3}{ c }{CIFAR-10 Deep} \\
    \toprule

    \multicolumn{1}{ c| }{Method} &
    \multicolumn{1}{ c }{time(s)} &
    \multicolumn{1}{ c }{branches} &
    \multicolumn{1}{ c| }{$\%$timeout} &
    \multicolumn{1}{ c }{time(s)} &
    \multicolumn{1}{ c }{branches} &
    \multicolumn{1}{ c| }{$\%$timeout} &
    \multicolumn{1}{ c }{time(s)} &
    \multicolumn{1}{ c }{branches} &
    \multicolumn{1}{ c }{$\%$timeout} \\

    \cmidrule(lr){1-1} \cmidrule(lr){2-4} \cmidrule(lr){5-7} \cmidrule(lr){8-10}
    
    \multicolumn{1}{ c| }{{MIPplanet}~\citep{ehlers2017formal}}
        & 2849.69
        & -
        & 68.00

        &2417.53
        &-
        &46.00

        &2302.25
        &-
        &40.00\\
        
    \multicolumn{1}{ c| }{{BaBSR}~\citep{bunel2020branch}}
        &2367.78
        &1020.55
        &36.00

        &2871.14	
        &812.65
        &49.00

        &2750.75	
        &401.28
        &39.00\\

    \multicolumn{1}{ c| }{{GNN-online}~\citep{lu2019neural}}
        &1794.85
        &565.13
        &33.00

        &1367.38	
        &372.74
        &15.00	

        &1055.33	
        &131.85	
        &4.00 \\

    \multicolumn{1}{ c| }{{BDD+ BaBSR}~\citep{Bunel2020}}
        &807.91
        &195480.14	
        &20.00	

        &505.65
        &74203.11	
        &10.00

        &266.28
        &12722.74
        &4.00 \\
    
     \multicolumn{1}{ c| }{{Fast-and-Complete}~\citep{xu2020fast}}
        &695.01
        &119522.65	
        &17.00	

        &495.88
        &80519.85	
        &9.00	

        &105.64
        &2455.11	
        &1.00\\

     \multicolumn{1}{ c| }{{OVAL (BDD+ GNN)}$^*$\citep{Bunel2020,lu2019neural}}
        &662.17
        &67938.38
        &16.00	

        &280.38
        &17895.94
        &6.00

        &94.69
        &1990.34
        &1.00 \\
        
    \multicolumn{1}{ c| }{{A.set BaBSR}~\citep{DePalma2021}}
        &381.78	
        &12004.60
        &{7.00}

        &{165.91}
        &2233.10
        &{3.00}	

        &190.28
        &2491.55
        &2.00 \\
    
     \multicolumn{1}{ c| }{{BigM+A.set BaBSR}~\citep{DePalma2021}}
        &390.44	
        &11938.75
        &{7.00}

        &172.65
        &4050.59
        &{3.00}	

        &177.22
        &3275.25
        &2.00\\

    \multicolumn{1}{ c| }{BaDNB ({BDD+ FSB})\citep{de2021improved}}
        &309.29
        &38239.04
        &7.00	

        &165.53
        &11214.44	
        &4.00	

        &10.50
        &368.16	
        &\textbf{0.00}\\

     \multicolumn{1}{ c| }{{ERAN}$^*$\citep{singh2019beyond,singh2018fast,singh2019abstract,singh2018boosting}}
        & 805.94
        & -
        & 5.00

        &632.20
        &-
        &9.00

        &545.72
        &-
        &\textbf{0.00}\\
        



        \multicolumn{1}{ c| }{{$\beta$-CROWN}~\cite{wang2021beta}}
        &{118.23}
        &208018.21	
        &{3.00}	

        &{78.32}
        &116912.57	
        &{2.00}	

        &{5.69}
        &{41.12}	
        &\textbf{0.00}
        \\
         \hline
        
        \multicolumn{1}{ c| }{{$\alpha$-CROWN+MIP\textdagger}}
         &{335.50}
        & 8523.37
        &{3.00}	

        & 203.87
        & 2029.60
        &\textbf{0.00}	

        & 76.90
        & 1364.24
        &\textbf{0.00}\\

         \multicolumn{1}{ c| }{{\ourmethodfull}}
        &\textbf{4.07}
        & 	2580.53
        &\textbf{0.00}	

        &\textbf{3.02}
        & 2095.18 
        &\textbf{0.00}	

        &\textbf{3.87}
        &  110.92
        &\textbf{0.00}\\ 
        
    \bottomrule

\end{tabular}
\begin{tablenotes}
\item [*] Results from VNN-COMP 2020 report~\citep{vnn2020}. \quad {\textdagger} A new baseline proposed and evaluated in this work, not presented in previous papers.
\end{tablenotes}
\end{threeparttable}
\end{adjustbox}
\end{table*}

\begin{figure*}[t]
    \centering
    \begin{tabular}{ccc}
        \hspace{-2mm} \includegraphics[width=.42\textwidth]{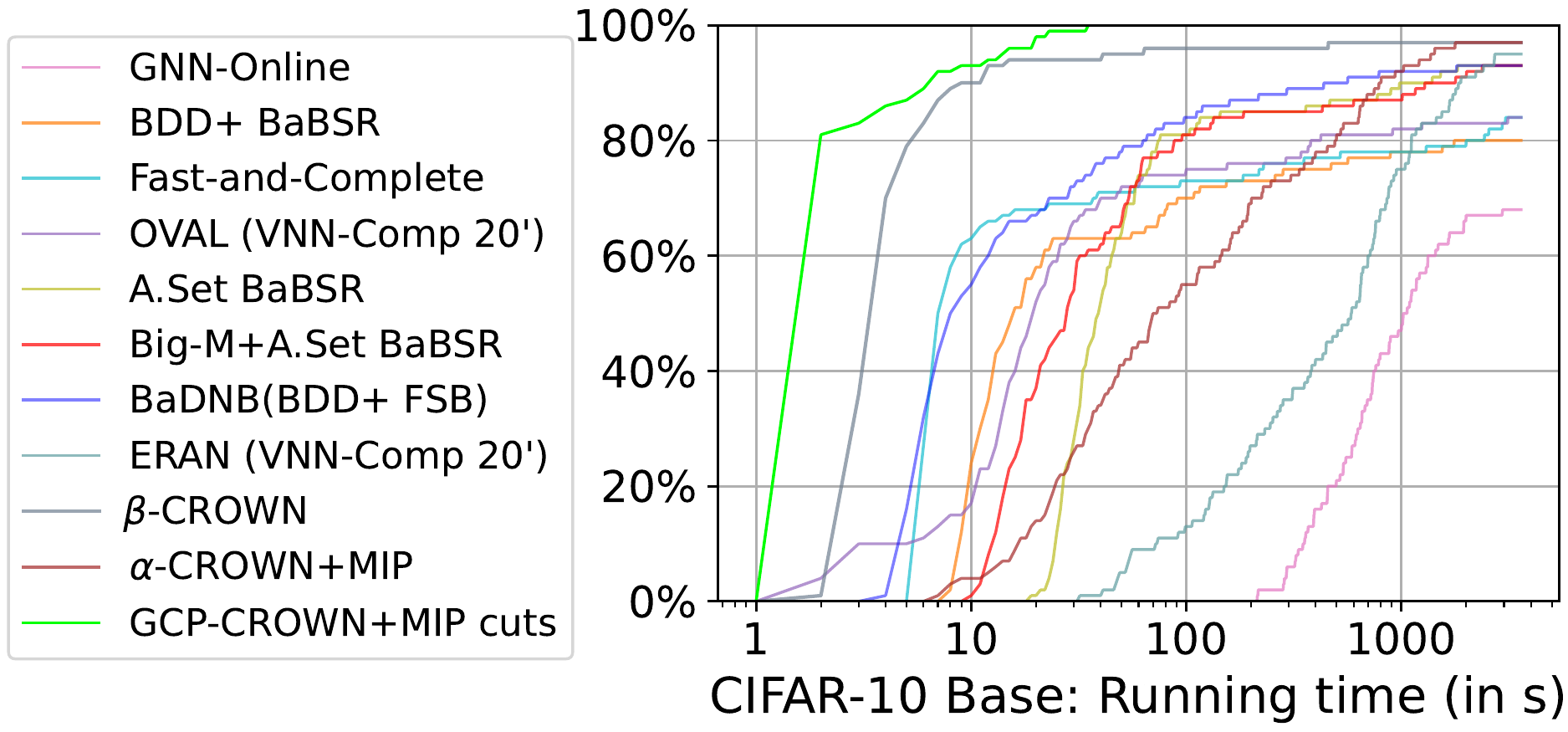}
         &\hspace{-5mm}  \includegraphics[width=.275\textwidth]{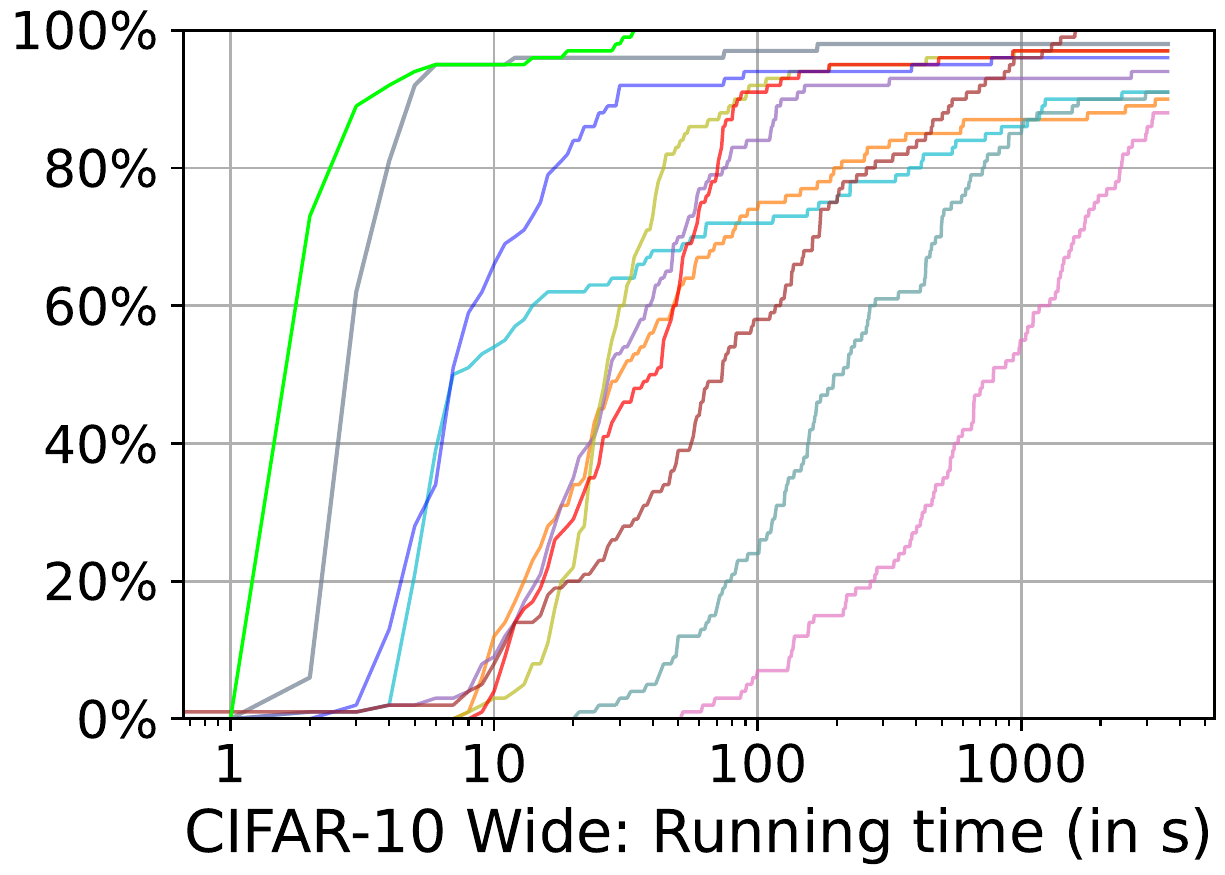}
         & \hspace{-5mm} \includegraphics[width=.275\textwidth]{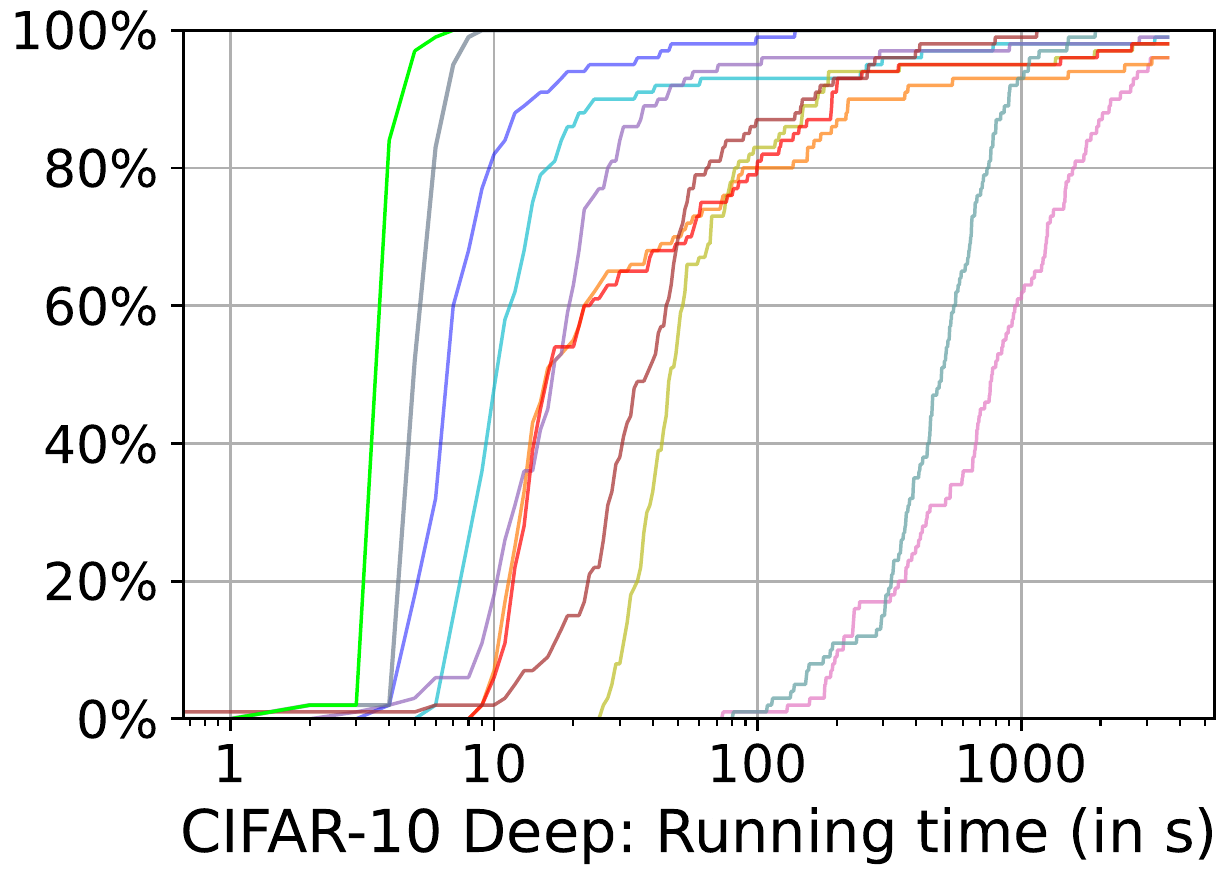}
    \end{tabular}
    \vspace{-7pt}
    \caption{\footnotesize Percentage of solved properties on the \texttt{oval20} benchmark vs.\ running time (timeout 1 hour). }
    \label{fig:rate_time_complete}
\end{figure*}

\begin{table*}[t]
\centering
\caption{\footnotesize VNN-COMP 2021 benchmarks: \texttt{oval21} and \texttt{cifar10-resnet}. Results marked with VNN-COMP are from publicly available benchmark data on VNN-COMP 2021 Github. ``-'' indicates unsupported model.
}
\vspace{3pt}
\label{table:vnncomp-21}
\scriptsize
\begin{adjustbox}{center}
\begin{threeparttable}[hbt]
\begin{tabular}{cccc|ccc}

    & \multicolumn{3}{ c }{\texttt{oval21} (30 properties; PGD upper bound 27)} & \multicolumn{3}{ c }{\texttt{cifar10-resnet} (72 properties)}  \\
    \toprule

    \multicolumn{1}{ c| }{Method} &
    \multicolumn{1}{ c }{time(s)} &
    \multicolumn{1}{ c }{\# verified} &
    \multicolumn{1}{ c| }{\%timeout} &
    \multicolumn{1}{ c }{time(s)} &
    \multicolumn{1}{ c }{\# verified} &
    \multicolumn{1}{ c }{\%timeout}  \\

    \cmidrule(lr){1-1} \cmidrule(lr){2-4} \cmidrule(lr){5-7} 
    
      \multicolumn{1}{ c| }{{nnenum$^*$~\cite{bak2021nfm,bak2020cav}}}
        & 630.06
        & 2
        & 86.66

        & -
        & -
        & -
        \\

     \multicolumn{1}{ c| }{{Marabou$^*$~\cite{katz2019marabou}}}
        & 429.13
        & 5
        & 73.33

        & 157.70
        & 39
        & 45.83
        \\
        
    \multicolumn{1}{ c| }{{ERAN$^*$\cite{muller2021precise,muller2021scaling}}}
        & 233.84
        & 6
        & 70.00

        & 129.48
        & 43
        & 40.28
        \\

    \multicolumn{1}{ c| }{{OVAL$^*$~\cite{de2021improved,DePalma2021}}}
        & 393.14
        & 11
        & 53.33

        & -	
        & -
        & -
        \\

    \multicolumn{1}{ c| }{{VeriNet$^*$~\cite{henriksen2020efficient,henriksen2021deepsplit}}}
        &  414.61
        & 11 
        & 53.33

        & 105.91
        & 48
        & 33.33
        \\

    \multicolumn{1}{ c| }{{$\alpha$,$\beta$-CROWN$^*$~\cite{zhang2018efficient,xu2020fast,wang2021beta}}}
        & 395.32
        & 11
        & 53.33

        & 99.87
        & 58
        & 19.44
        \\
    
    \multicolumn{1}{ c| }{{MN-BaB~\cite{ferrari2022complete}}}
        & 435.46
        & 10
        & 56.66

        & -
        & -
        & -
        \\
    
    \multicolumn{1}{ c| }{{\revisedtext{Venus2\textdagger~\cite{botoeva2020efficient,kouvaros2021towards}}}}
        & 386.71
        & 17
        & 33.33

        & -
        & -
        & -
        \\
    \hline
    
    \multicolumn{1}{ c| }{{$\alpha$-CROWN+MIP}}
        & 301.23
        & 15
        & 40.00

        & 125.48
        & 46
        & 36.11
        \\
        
    \multicolumn{1}{ c| }{{\ourmethodfull}}
        &\textbf{145.26}
        &\textbf{23}	
        & \textbf{13.33}

        &\textbf{53.49 }
        &\textbf{63}	
        & \textbf{12.5}

        \\ 
        
    \bottomrule

\end{tabular}
\begin{tablenotes}
\item [*] Results from VNN-COMP 2021 report~\citep{bak2021second}. \quad {\textdagger} We use the latest code of Venus2 in the \texttt{vnncomp} branch, committed on Jul 18, 2022. Older versions cannot run these convolutional networks.
\end{tablenotes}
\end{threeparttable}
\end{adjustbox}
\end{table*}

\begin{table}[tb]
    \small
        \caption{\footnotesize {\bf Verified accuracy (\%)} and avg.\ per-example verification time (s) on 7 models from SDP-FO~\cite{dathathri2020enabling}.}
        \vspace{-5pt}
        \centering
        \label{tab:sdp_model}
      \adjustbox{max width=.99\textwidth}{
          \begin{threeparttable}[hbt]
        \begin{tabular}{c|c|rr|rr|rr|rr|rr|rr|rr|r}
        \toprule
        Dataset                       & Model         & \multicolumn{2}{c|}{SDP-FO~\cite{dathathri2020enabling}$^*$} & \multicolumn{2}{c|}{PRIMA~\cite{muller2021precise}} & \multicolumn{2}{c|}{{$\beta$-CROWN~\cite{wang2021beta} }} & \multicolumn{2}{c|}{{MN-BaB~\cite{ferrari2022complete} }} &
        \multicolumn{2}{c|}{{\revisedtext{Venus2~\cite{botoeva2020efficient,kouvaros2021towards}}}} &
        \multicolumn{2}{c|}{$\alpha$-CROWN+MIP} & \multicolumn{2}{c|}{{\ourmethod}} & Upper  \\
        \multicolumn{2}{c|}{$\epsilon=0.3$ and $\epsilon=2/255$} & Verified\%      & Time (s)      & Ver.\%         & Time(s)        & Ver.\%        & Time(s)  & Ver.\%        & Time(s)   & Ver.\%        & Time(s)  & Ver.\%        & Time(s)  & Ver.\%        & Time(s)     &  bound    \\ \midrule
        MNIST                         & CNN-A-Adv   &    43.4             & $>$20h          & 44.5            & 135.9           &    {70.5}         &   21.1   & - & - &  35.5 & 148.4  &  56.5 & 224.3  & \textbf{72.0} &  19.9  & 76.5          \\ \hline
        \multirow{6}{*}{CIFAR}        & CNN-B-Adv       &    32.8          &  $>$25h         &  38.0           & 343.6       &     {46.5}        &     32.2   & - & - &  - & - & 27.0  & 360.6  &  \textbf{48.5}  & 57.8  & 65.0         \\
          & CNN-B-Adv-4     &     46.0            & $>$25h          & 53.5          & 43.8        &    {54.0}         &     11.6  & - & - & - &  - & 52.5  & 129.5  &  \textbf{59.0} & 21.5 & 63.5           \\
          & CNN-A-Adv     &   39.6              &   $>$25h        & 41.5          & 4.8          &      {44.0}       & 5.8    & 42.5 & 68.3 &  47.5 & 26.0  & 46.0 & 63.1   & \textbf{48.5} & 9.8  & 50.0            \\
          & CNN-A-Adv-4    &    40.0           &    $>$25h       & 45.0         & 4.9         &      {46.0}       &  5.6   & 46.0 & 37.7 &  47.5 & 13.1  &  \textbf{48.5} & 16.4 & \textbf{48.5} & 5.7     & 49.5     \\
          & CNN-A-Mix      &      39.6           & $>$25h          & 37.5        & 34.3          &     {41.5}       &  49.6    & 35.0 & 140.3  &  33.5 & 72.4  & 32.5  & 231.3  & \textbf{47.5} & 29.2 & 53.0       \\
          & CNN-A-Mix-4    &    47.8            &     $>$25h      & 48.5           & 7.0         &     {50.5}        &     5.9  & 49.0 & 70.9  &  49.0 & 37.3   & 52.5 & 77.7 & \textbf{55.5} & 12.4 & 57.5         \\ \bottomrule
        \end{tabular}
        \begin{tablenotes}[flushleft,para]
\item [\textsuperscript{*}] We run $\alpha$-CROWN+MIP and MN-BaB with 600s timeout threshold for all models. ``-'' indicates that we could not run a model due to unsupported model structure or other errors. We run our \ourmethodfull with a shorter 200s timeout for all models and it achieves better verified accuracy than all other baselines. Other results are reported from~\citep{wang2021beta}.
\end{tablenotes}
\end{threeparttable}
        }
        \label{tab:verified_acc_sdp}
\end{table}
\section{Experiments}\label{sec:experiments}

We now evaluate our verifier, \ourmethodfull, on a few popular verification benchmarks. Since our verifier uses a MIP solver in our pipeline, we also include a new baseline, $\alpha$-CROWN+MIP, which uses \texttt{gurobi} as the MIP solver with the tightest possible intermediate layer bounds from $\alpha$-CROWN~\citep{xu2020automatic}. 
\revisedtext{We use the same branch and bound algorithm as in $\beta$-CROWN and we use filtered smart branching (FSB)~\cite{DePalma2021} as the branching heuristic in all experiments. Without cutting planes, \ourmethod becomes vanilla $\beta$-CROWN as we share the same code base as $\beta$-CROWN. }
We include model information and detailed setup for our experiments in Appendix~\ref{sec:app_exp}. GCP-CROWN has been integrated into the \href{http://abcrown.org}{$\alpha,\!\beta$\texttt{-CROWN}} (alpha-beta-CROWN) verifier, and the instructions to reproduce results in this paper are available at \textcolor{blue}{\url{http://PaperCode.cc/GCP-CROWN}}.

\paragraph{Results on the \texttt{oval20} benchmark in VNN-COMP 2020.} \texttt{oval20} is a popular benchmark consistently used in huge amount of NN verifiers and it perfectly reflects the progress of NN verifiers. We include literature results for many baselines in Table~\ref{table:average_complete}. We are the only verifier that can \emph{completely solve} all three models without any timeout. Our average runtime is significantly lower compared to the time of baselines because we have no timeout (counted as 3600s), and our slowest instance only takes about a few minutes while easy ones only take a few seconds, as shown in Figure~\ref{fig:rate_time_complete}. Additionally, we often use less number of branches compared to the state-of-the-art verifier, $\beta$-CROWN, since our strong cutting planes help us to eliminate many hard to solve subdomains in BaB. Furthermore, we highlight that $\alpha$-CROWN+MIP also achieves a low timeout rate, although it is much slower than our bound propagation based approach combined with cuts from a MIP solver.

\paragraph{Results on VNN-COMP 2021 benchmarks.} Among the eight scoring benchmarks in VNN-COMP 2021~\citep{bak2021second}, only two (\texttt{oval21} and \texttt{cifar10-resnet}) are most suitable for the evaluation of this work. Among other benchmarks, \texttt{acasxu} and \texttt{nn4sys} have low input dimensionality and require input space branching rather than ReLU branching; \texttt{verivital}, \texttt{mnistfc}, and \texttt{eran} benchmarks consist of small MLP networks that can be solved directly by MIP solvers; \texttt{marabou} contains mostly adversarial examples, making it a good benchmark for falsifiers rather than verifiers. We present our results in Table~\ref{table:vnncomp-21}. Besides results from 6 VNN-COMP 2021 participants, we also include two additional baselines, $\alpha$-CROWN+MIP (same as in Table~\ref{table:average_complete}), and MN-BaB~\citep{ferrari2022complete}, a recently proposed branch and bound framework with multi-neuron relaxations~\citep{muller2020neural}, which can be viewed as a restricted form of cutting planes. On the \texttt{oval21} benchmark, OVAL, VeriNet and $\alpha\!,\!\beta$-CROWN are the best performing tools, verified 11 out of 27 instances, while we can verify \emph{twice more} instances (22 out of 27) on this benchmark. On the \texttt{cifar10-resnet} benchmark, our verifier also solves the most number of instances and achieves the lowest average time. In fact, $\alpha$-CROWN+MIP is also a strong baseline, solving 4 more instances than all competition participants, showing the importance of strong cutting planes. Our \ourmethodfull combines the benefits of fast bound propagation on GPU with the strong cutting planes generated by a MIP solver and achieves the best performance. We present a more detailed analysis on the cutting planes used in this benchmark in Appendix~\ref{sec:app_cutting_planes}.

The \texttt{oval21} benchmark was also included as part of VNN-COMP 2022, concluded in July 2022, with a different sample of 30 instances. \ourmethod is part of the winning tool in VNN-COMP 2022, $\alpha,\!\beta$-CROWN, which verified 25 out of 30 instances in this benchmark, outperforming the second place tool (MN-BaB~\citep{ferrari2022complete} with multi-neuron relaxations) with 19 verified instances by a large margin. More results on VNN-COMP 2022 can be found in these slides\footnote{A formal competition report is under preparation by VNN-COMP organizers; scores were presented in FLoC 2022: \url{https://drive.google.com/file/d/1nnRWSq3plsPvOT3V-drAF5D8zWGu02VF/view}.}.

\paragraph{Results on SDP-FO benchmarks.} We further evaluate our method on the SDP-FO benchmarks in~\citep{dathathri2020enabling,wang2021beta}. This benchmark contains 7 mostly adversarially trained MNIST and CIFAR models with 200 instances each, which are hard for many existing verifiers. Beyond the baselines reported in~\citep{wang2021beta}, we also include two additional baselines, $\alpha$-CROWN+MIP (same as in Table~\ref{table:average_complete}) and MN-BaB~\citep{ferrari2022complete} (same as in Table~\ref{table:vnncomp-21}). Table~\ref{tab:sdp_model} shows that our method improves the percentage of verified images (``verified accuracy'') on all models compared to state-of-the-art verifiers, further closing the gap between verified accuracy and empirical robust accuracy obtained by PGD attack (reported as ``upper bound'' in  Table~\ref{tab:sdp_model}). 

\section{Related Work}
\vspace{-10pt}
Cutting plane method is a classic technique to strengthen the convex relaxation of an integer programming problem. Generic cutting planes such as Gomory's cut~\citep{gilmore1961linear,gilmore1963linear}, Chv\'{a}tal–Gomory cut~\citep{chvatal1973edmonds}, implied bound cut~\citep{hoffman1993solving},  lift-and-project~\citep{lovasz1991cones}, reformulation-linearization techniques~\citep{sherali2013reformulation} and mixed integer rounding cuts~\citep{nemhauser1990recursive,marchand2001aggregation} can be applied to almost any LP relaxed problems, and problem specific cutting planes such as Knapsack cut~\citep{crowder1983solving}, Flow-cover cut~\citep{padberg1985valid} and Clique cut~\citep{johnson1982degree} require specific problem structures. Modern MIP solvers typically uses a branch-and-cut strategy, which tends to generate a large number of cuts before starting the next iteration of branching, and solve the LP relaxation of the MIP problem with cutting planes with an exact method such as the Simplex method. Our \ourmethod is a specialized solver for the NN verification problem, which can quickly obtain a lower bound of the LP relaxation with cutting planes specially for the NN verification problem.


The verification of piece-wise linear NNs can be formulated as a MIP problem, so early works~\citep{katz2017reluplex,katz2019marabou,tjeng2017evaluating} solve an integer or combinatorial formulation directly. For efficiency reasons, most recent works use a convex relaxation such as linear relaxation~\citep{ehlers2017formal,wong2018provable} or semidefinite relaxation~\citep{raghunathan2018semidefinite,dathathri2020enabling}. Salman et al.\ \citep{salman2019convex} discussed the limitation of many convex relaxation based NN verification algorithms and coined the term ``convex relaxation barrier'', specifically for the popular single-neuron ``Planet'' relaxation~\citep{ehlers2017formal}. Several works developed novel techniques to break this barrier. \citep{singh2019beyond} added constraints that depends on the aggregation of multiple neurons, and these constraints were passed to a LP solver. \citep{muller2021precise} enhanced the multi-neuron formulation of \citep{singh2019beyond} to obtain tighter relaxations. \citep{anderson2020strong} studied stronger convex relaxations for a ReLU neuron after an affine layer, and \citep{tjandraatmadja2020convex} constructed efficient algorithms based on this relaxation for incomplete verification. \citep{DePalma2021} extended the formulation in~\citep{anderson2020strong} to a dual form and combined it with branch and bound to achieve completeness. \revisedtext{\citep{botoeva2020efficient} proposed specialized cuts by considering neuron dependencies and solve them using a MIP solver.} \citep{ferrari2022complete} combined the multi-neuron relaxation~\citep{muller2021precise} with branch and bound. 
Although these works can be seen as a special form of cutting planes, they mostly focused on enhancing the relaxation for several neurons within a single layer or two adjacent layers. \ourmethod can efficiently handle general cutting plane constraints with neurons from any layers in a bound propagation manner, and the cutting planes we find from a MIP solver can be seen as tighter convex relaxations encoding multi-neuron and multi-layer correlations.

\vspace{-5pt}
\section{Conclusion}
\vspace{-5pt}
In this paper, we propose \ourmethod, an efficient and GPU-accelerated bound propagation method for NN verification capable of handling any cutting plane constraints. We combine \ourmethod with branch and bound and high quality cutting planes generated by a MIP solver to tighten the convex relaxation for NN verification. The combination of fast bound propagation and strong cutting planes lead to state-of-the-art verification performance on multiple benchmarks. Our work opens up a great opportunity for studying more efficient and powerful cutting planes for NN verification.

\paragraph{Limitations of this work} Our work generalizes existing bound propagation methods that can handle only simple constraints (such as neuron split constraints in $\beta$-CROWN~\citep{wang2021beta}) to general constraints, and we share a few common limitations as in previous works~\citep{bunel2018unified,xu2020fast,DePalma2021}: the branch-and-bound procces and bound propagation procedure are developed on ReLU networks, and it can be non-trivial to extend it to neural networks with non-piecewise-linear operations. In addition, we currently directly use cutting planes generated by a generic MIP solver, and there might exist stronger and faster cutting plane methods that can exploit the structure of the neural network verification problem. We hope these limitations can be addressed in future works.

\paragraph{Potential Negative Societal Impact} Our work focuses on formally proving desired properties of a neural network under investigation such as safety and robustness, which is an important direction of trustworthy machine learning and has overall positive societal impact. Since our verifier is a complete verifier, it might be possible to use it to find weakness of a neural network and guide adversarial attacks. However, we believe that formally characterizing a model's behavior and potenti al weakness is important for building robust models and preventing real-world malicious attack.

\paragraph{Funding Disclosure} This work is partially supported by the NSF grant No.1910100, NSF CNS No.2046726, NSF IIS No.2008173, NSF IIS No.2048280 and the Alfred P. Sloan Foundation. Huan Zhang is supported by a grant from the
Bosch Center for Artificial Intelligence. Suman Jana acknowledges the NSF CAREER award. 

\bibliographystyle{plain}
\bibliography{reference}

\newpage
\appendix
\onecolumn
\textbf{\Large Appendix} \\

In Section~\ref{sec:app_dual_lp} we derive the bound propagation procedure of \ourmethod and give a proof for Theorem 3.1 (\ourmethod bound propagation). In Section~\ref{sec:app_background} we give additional background on branch and bound. In section~\ref{sec:app_exp} we give more details of experiments, more results as well as a case study for cutting planes used in \ourmethodfull.

\section{The dual problem with cutting planes}
\label{sec:app_dual_lp}
In this section we derive the dual formulation for neural network verification problem with general cutting planes (or arbitrarily added linear constraints across any layers). Our derivation is based on the linearly relaxed MIP formulation with original binary variables $\z$ intact, to allow us to add cuts also to the integer variable (the mostly commonly used triangle relaxation does not have $\z$). We first write the full LP relaxation with arbitrary cutting planes, as well as their corresponding dual variables:

\begin{align}
    f^*_\text{LP-cut}&=\min_{\x, \hx, \z} f(x) \nonumber \\ 
\text{s.t.}  \ \   \quad f(\x)&=\x{L}; \quad \x_0 - \epsilon \leq \x \leq \x_0 + \epsilon; &  \tag{\ref{eq:mip_output}} \\
\x{i}&=\W{i} \hx{i-1} + \bias{i}; \ \quad i \in [L],  &\Rightarrow \nuvar{i} \in \R^{d_i} \tag{\ref{eq:mip_layer}}  \\
\hx{i}{j} &\geq 0; \ j \in \seti{i} &\Rightarrow \muvar{i}{j} \in \R \tag{\ref{eq:mip_unstab1}}\\ 
\hx{i}{j} &\geq \x{i}{j}; \ j \in \seti{i} & \Rightarrow \tauvar{i}{j} \in \R \tag{\ref{eq:mip_unstab2}}\\ 
\hx{i}{j} &\leq \bu{i}{j} \z{i}{j}; \ j \in \seti{i} & \Rightarrow \gammavar{i}{j} \in \R \tag{\ref{eq:mip_unstab3}}\\
\hx{i}{j} &\leq \x{i}{j} - \bl{i}{j}(1-\z{i}{j}); \ j \in \seti{i} & \Rightarrow \pivar{i}{j} \in \R \tag{\ref{eq:mip_unstab4}}\\
\hx{i}{j} &= \x{i}{j}; \ j \in \setip{i} &  \tag{\ref{eq:mip_active}}\\
\hx{i}{j} &= 0; \ j \in \setin{i} &  \tag{\ref{eq:mip_inactive}} \\
0 &\leq \z{i}{j} \leq 1; \quad j \in \seti{i}, \ i \in [L] & \tag{\ref{eq:lp_integer}} \\
\sum_{i=1}^{L-1} & \left (\Xcut{i} \x{i} + \hXcut{i} \hx{i} + \Zcut{i} \z{i} \right ) \leq \bm{d} & \Rightarrow \betavar \in \R^{N} \tag{\ref{eq:cuts}}
\end{align}

The Lagrangian function can be constructed as:
  
\begin{align*}
    f^*_\text{LP-cut}&=\min_{\x, \hx, \z} \max_{\nuvar, \muvar, \tauvar, \gammavar, \pivar, \betavar} \x{L} + \sum_{i=1}^L \nuvar{i}^\top \left ( \x{i} - \W{i} \hx{i-1} + \bias{i}\right ) \\
    &+\sum_{i=1}^{L-1} \sum_{j \in \seti{i}} \left [ \muvar{i}{j}(-\hx{i}{j}) + \tauvar{i}{j}(\x{i}{j} - \hx{i}{j}) + \gammavar{i}{j}(\hx{i}{j} - \bu{i}{j}\z{i}{j}) + \pivar{i}{j}(\hx{i}{j} - \x{i}{j} + \bl{i}{j} - \bl{i}{j}\z{i}{j}) \right ] \\
    &+ \betavar^\top \left [ \sum_{i=1}^{L-1} \left (\Xcut{i} \x{i} + \hXcut{i} \hx{i} + \Zcut{i} \z{i} \right ) - \bm{d} \right ] \\
\text{s.t.} \enskip \hx{i}{j} &= 0, j \in \setin{i}; \quad \hx{i}{j} = \x{i}{j}, j \in \setip{i}; \quad \x_0 - \epsilon \leq \x \leq \x_0 + \epsilon; \quad 0 \leq \z{i}{j} \leq 1, j \in \seti{i} \\
\muvar & \geq 0; \quad \tauvar \geq 0; \quad \gammavar \geq 0; \quad \pivar \geq 0; \quad \betavar \geq 0
\end{align*}
Here $\muvar, \tauvar, \gammavar, \pivar$ are shorthands for all dual variables in each layer and each neuron. Note that for some constraints, their dual variables are not created because they are trivial to handle in the step steps. Rearrange the equation and swap the min and max (strong duality) gives us:

\begin{align*}
    f^*_\text{LP-cut}&=\max_{\nuvar, \muvar, \tauvar, \gammavar, \pivar, \betavar} \min_{\x, \hx, \z} \enskip (\nuvar{L} + 1) \x{L} - \nuvar{1}^\top\W{1}\hx{0} \\
    &+ \sum_{i=1}^{L-1} \sum_{j \in \setip{i}} \left ( \nuvar{i}{j} + \betavar^\top \Xcut{i}{:,j} - \nuvar{i+1}^\top \W{i+1}{:,j} + \betavar^\top \hXcut{i}{:,j} \right ) \x{i}{j} \\
    &+ \sum_{i=1}^{L-1} \sum_{j \in \setin{i}} \left (\nuvar{i}{j} + \betavar^\top \Xcut{i}{:,j}  \right ) \x{i}{j} \\
    &+ \sum_{i=1}^{L-1} \sum_{j \in \seti{i}} \Big [ \left (\nuvar{i}{j} + \betavar^\top \Xcut{i}{:,j} + \tauvar{i}{j} - \pivar{i}{j} \right ) \x{i}{j} \\
    &+ \left (-\nuvar{i}^\top \W{i}{:,j} - \muvar{i}{j} - \tauvar{i}{j} + \gammavar{i}{j} + \pivar{i}{j} + \betavar^\top \hXcut{i}{:,j} \ \right ) \hx{i}{j} \\ &+ \left ( -\bu{i}{j} \gammavar{i}{j} - \bl{i}{j}\pivar{i}{j} + \betavar^\top \Zcut{i}{:,j} \right ) \z{i}{j} \Big ] \\
    &- \sum_{i=1}^{L} \nuvar{i}^\top \bias{i} + \sum_{i=1}^{L-1} \sum_{j \in \seti{i}} \pivar{i}{j} \bl{i}{j} - \betavar^\top \bm{d}\\
\text{s.t.} \quad \x_0 - \epsilon & \leq \x \leq \x_0 + \epsilon; \quad 0 \leq \z{i}{j} \leq 1, j \in \seti{i} \\
\muvar & \geq 0; \quad \tauvar \geq 0; \quad \gammavar \geq 0; \quad \pivar \geq 0; \quad \betavar \geq 0
\end{align*}
Here $\W{i+1}{:,j}$ denotes the $j$-th column of $\W{i+1}$. Note that for the term involving $j \in \setip{i}$ we have replaced $\hx{i}$ with $\x{i}$ to obtain the above equation. For the the term $\x{i}{j}, j \in \setip{i}$ it is always 0 so does not appear.

Solving the inner minimization gives us the dual formulation:

\begin{align}
    f^*_\text{LP-cut}=\max_{\nuvar, \muvar, \tauvar, \gammavar, \pivar, \betavar} &-\epsilon \| \nuvar{1}^\top \W{1} \x_0 \|_1 - \sum_{i=1}^{L} \nuvar{i}^\top \bias{i} - \betavar^\top \bm{d} \nonumber \\
     &+ \sum_{i=1}^{L-1} \sum_{j \in \seti{i}} \left [ \pivar{i}{j} \bl{i}{j} - \ReLU(\bu{i}{j}\gammavar{i}{j} + \bl{i}{j}\pivar{i}{j} - \betavar^\top \Zcut{i}{:,j}) \right ] \label{eq:appendix_dual_formulation} \\
\text{s.t.} \enskip \nuvar{L} &= -1 \label{eq:appendix_dual_start} \\
\nuvar{i}{j} &= \nuvar{i+1}^\top \W{i+1}{:,j} - \betavar^\top (\Xcut{i}{:,j} + \hXcut{i}{:,j}), \quad \text{for}\enskip j \in \setip{i}, \ i \in [L-1] \label{eq:appendix_dual_active} \\
\nuvar{i}{j} &= -\betavar^\top \Xcut{i}{:,j}, \quad \text{for}\enskip j \in \setin{i}, \ i \in [L-1] \label{eq:appendix_dual_inactive} \\
\nuvar{i}{j} &= \pivar{i}{j} - \tauvar{i}{j} - \betavar^\top \Xcut{i}{:,j} \quad \text{for}\enskip j \in \seti{i}, \ i \in [L-1] \label{eq:appendix_dual_unstable1} \\
\left (\pivar{i}{j} + \gammavar{i}{j} \right ) - \left (\muvar{i}{j} + \tauvar{i}{j} \right ) &= \nuvar{i+1}^\top \W{i+1}{:,j} - \betavar^\top \hXcut{i}{:,j}, \quad \text{for}\enskip j \in \seti{i}, \ i \in [L-1] \label{eq:appendix_dual_unstable2} \\
\text{s.t.} \quad \muvar & \geq 0; \quad \tauvar \geq 0; \quad \gammavar \geq 0; \quad \pivar \geq 0; \quad \betavar \geq 0 \nonumber
\end{align}

The $\| \cdot \|_1$ comes from minimizing over $\x_0$ with the constraint $\x_0 - \epsilon \leq \x \leq \x_0 + \epsilon$, and the $\ReLU(\cdot)$ term comes from minimizing over $\z{i}{j}$ with the constraint $0 \leq \z{i}{j} \leq 1$.

Before we give the bound propagation procedure, we first give a technical lemma:

\begin{lemma}
\label{lemma:optimization_pi_gamma}
Given $u \geq 0$, $l \leq 0$, $\pi \geq 0$, $\gamma \geq 0$, and $\pi + \gamma = C$, and define the function:
\[
g(\pi, \gamma) = -\ReLU(u \gamma + l \pi + q) + l \pi
\]
Then 
\[
\max_{\pi \geq 0, \gamma \geq 0} g(\pi, \gamma) = 
\begin{cases}
l \pi^*, & \text{if $-u C \leq q \leq -l C$} \\
0, & \text{if $q < -u C$} \\
-q, & \text{if $q > -l C$}
\end{cases}
\]
where the optimal values for $\pi$ and $\gamma$ are:
\[
\pi^* = \max \left ( \min \left (\frac{uC + q} {u - l}, C \right ), 0 \right ), \quad 
\gamma^* = \max \left ( \min \left (\frac{-lC - q} {u - l}, C \right ), 0 \right )
\]
\end{lemma}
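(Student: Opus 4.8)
The plan is to eliminate the equality constraint $\pi + \gamma = C$ (note that $C \geq 0$ because $\pi, \gamma \geq 0$) by substituting $\gamma = C - \pi$ and treating $\pi$ as the single free variable ranging over the interval $[0, C]$. Writing the argument of the $\ReLU$ as $A(\pi) := u(C-\pi) + l\pi + q = (uC + q) + (l - u)\pi$, which is affine and non-increasing in $\pi$ since its slope $l - u \leq 0$ (I assume the genuinely unstable case $u - l > 0$, so that the quoted fractions are well defined). The objective then reduces to the one-variable function $\tilde g(\pi) := -\ReLU(A(\pi)) + l\pi$ on $[0,C]$.

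Next I would observe that $\tilde g$ is piecewise linear with a single kink. On the region $A(\pi) \geq 0$ the $\ReLU$ is active and a short computation gives $\tilde g(\pi) = u\pi - uC - q$, with slope $u \geq 0$; on the region $A(\pi) \leq 0$ the $\ReLU$ vanishes and $\tilde g(\pi) = l\pi$, with slope $l \leq 0$. Because $A$ is non-increasing, $A(\pi) \geq 0$ holds exactly for $\pi \leq \pi_0$ and $A(\pi) \leq 0$ for $\pi \geq \pi_0$, where $\pi_0 = (uC + q)/(u - l)$ is the unique root of $A$. Thus the slope of $\tilde g$ drops from $u$ to $l$ as $\pi$ grows, so $\tilde g$ is concave and its unconstrained maximizer is the kink $\pi_0$; evaluating either branch at $\pi_0$ and using $uC + q = (u - l)\pi_0$ shows both expressions agree and equal $l\pi_0$.

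Finally I would carry out the case analysis by intersecting $\pi_0$ with the feasible interval $[0,C]$, which is precisely what the clamp $\pi^* = \max(\min(\pi_0, C), 0)$ encodes. The condition $\pi_0 \in [0,C]$ is equivalent to $-uC \leq q \leq -lC$, giving the interior optimum $\tilde g(\pi^*) = l\pi^*$; if $q < -uC$ then $\pi_0 < 0$, the decreasing branch covers all of $[0,C]$, and the maximum is attained at $\pi^* = 0$ with value $0$; if $q > -lC$ then $\pi_0 > C$, the increasing branch covers all of $[0,C]$, and the maximum is attained at $\pi^* = C$ with value $-q$. The companion identity $\gamma^* = C - \pi^*$ then simplifies to the stated clamped form $\max(\min((-lC - q)/(u-l), C), 0)$ in each regime. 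I expect the only mildly delicate part to be bookkeeping the three boundary conditions consistently and checking that the clamped $\pi^*$ and $\gamma^*$ reproduce both the optimal value and the optimizers; the optimization itself is routine once concavity of $\tilde g$ is established.
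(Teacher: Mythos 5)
Your proof is correct and takes essentially the same route as the paper's: both reduce the problem to a one-variable piecewise-linear optimization over $[0,C]$ and locate the maximum at the kink $\pi_0=(uC+q)/(u-l)$ clamped to the feasible interval, then read off the three regimes. The only difference is presentational --- the paper handles the two ReLU-activity cases separately via a monotonicity argument on the reduced LP, whereas you unify them by noting the slope drops from $u\geq 0$ to $l\leq 0$, i.e.\ concavity of the one-variable objective.
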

\begin{proof}
Case 1: when $u \gamma + l \pi + q \geq 0$, the objective becomes:
\begin{align*}
\max_{\pi \geq 0, \gamma \geq 0} g(\pi, \gamma) &= -u \gamma - q \\
\text{s.t.} \enskip \pi + \gamma &= C \\
u \gamma + l \pi + q &\geq 0
\end{align*}
Since $q$ is a constant and the object only involves a non-negative variable $\gamma$ with non-positive coefficient $-u$, $\gamma$ needs to be as smaller as possible as long as the constraint is satisfied. Substitute $\pi = C - \gamma$ into the constraint $u \gamma + l \pi + q \geq 0$, 
\begin{align*}
u \gamma + l (C - \gamma) + q \geq 0 \Rightarrow \gamma \geq \frac{-lC - q}{u - l}
\end{align*}
Considering $\gamma$ is within $[0, C]$, the optimal $\gamma$ and $\pi$ are:
\[
\gamma^* = \max \left ( \min \left (\frac{-lC - q} {u - l}, C \right ) \right ), \quad
\pi^* = \max \left ( \min \left (\frac{uC + q} {u - l}, C \right ) \right )
\]
Case 2: when $u \gamma + l \pi + q \leq 0$, the objective becomes:
\begin{align*}
\max_{\pi \geq 0, \gamma \geq 0} g(\pi, \gamma) &= l \pi \\
\text{s.t.} \enskip \pi + \gamma &= C \\
u \gamma + l \pi + q &\leq 0
\end{align*}
Since $q$ is a constant and the object only involves a non-negative variable $\pi$ with non-positive coefficient $l$, $\pi$ needs to be as smaller as possible as long as the constraint is satisfied. Substitute $\gamma = C - \pi$ into the constraint $u \gamma + l \pi + q \leq 0$, 
\begin{align*}
u (C - \pi) + l \pi + q \leq 0 \Rightarrow \pi \geq \frac{uC + q}{u - l}
\end{align*}
The optimal $\pi^*$ and $\gamma^*$ are the same as in case 1. The optimal objective can be obtained by substitute $\pi^*$ and $\gamma^*$ into $g(\pi, \gamma)$. The objective depends on $q$ because when $q$ is too large $(q \geq -lC)$ or too small $(q \leq -uC)$, $\pi^*$ and $\gamma^*$ are fixed at either 0 or $C$.
\end{proof}

With this lemma, we are now ready to prove Theorem~\ref{thm:bound_propagation}, the bound propagation rule with general cutting planes (\ourmethod):

\begingroup
\renewcommand\thetheorem{\ref{thm:bound_propagation}} 
\begin{theorem}[Bound propagation with general cutting planes]
Given the following bound propagation rule on $\nuvar$ with optimizable parameter $0 \leq \alphavar{i}{j} \leq 1$ and $\betavar \geq 0$:
\begin{align*}
\nuvar{L} &= -1 \\
\nuvar{i}{j} &= \nuvar{i+1}^\top \W{i+1}{:,j} - \betavar^\top (\Xcut{i}{:,j} + \hXcut{i}{:,j}), & j \in \setip{i}, \ i \in [L-1]  \\
\nuvar{i}{j} &= -\betavar^\top \Xcut{i}{:,j}, & j \in \setin{i}, \ i \in [L-1]  \\
\hnuvar{i}{j} &:= \nuvar{i+1}^\top \W{i+1}{:,j} - \betavar^\top \hXcut{i}{:,j} & j \in \seti{i}, \ i \in [L-1] \\
\nuvar{i}{j} &:= \max \left ( \min \left (\frac{\bu{i}{j} [\hnuvar{i}{j}]_+ +  \betavar^\top \Zcut{i}{:,j}}{\bu{i}{j} - \bl{i}{j}}, [\hnuvar{i}{j}]_+ \right ), 0 \right ) - \alphavar{i}{j} [\hnuvar{i}{j}]_- - \betavar^\top \Xcut{i}{:,j} & j \in \seti{i}, \ i \in [L-1]
\end{align*}
Then $f^*_\text{LP-cut}$ is lower bounded by the following objective with any valid $0 \leq \alphavar \leq 1$ and $\betavar \geq 0$:
\begin{align*}
g(\alphavar, \betavar) = -\epsilon \| \nuvar{1}^\top \W{1} \x_0 \|_1 - \sum_{i=1}^{L} \nuvar{i}^\top \bias{i} - \betavar^\top \bm{d} + \sum_{i=1}^{L-1} \sum_{j \in \seti{i}} \hfunc{i}{j}(\betavar)
\end{align*}
where $\hfunc{i}{j}(\betavar)$ is defined as:
\[
\hfunc{i}{j}(\betavar) = \begin{cases}
\frac{\bu{i}{j} \bl{i}{j} [\hnuvar{i}{j}]_+ +  \bl{i}{j}\betavar^\top \Zcut{i}{:,j}}{\bu{i}{j} - \bl{i}{j}} & \text{if \enskip $\bl{i}{j}[\hnuvar{i}{j}]_+ \leq \betavar^\top \Zcut{i}{:,j} \leq \bu{i}{j}[\hnuvar{i}{j}]_+ $} \\
0 & \text{if \enskip $\betavar^\top \Zcut{i}{:,j} \geq \bu{i}{j}[\hnuvar{i}{j}]_+ $} \\
\betavar^\top \Zcut{i}{:,j} & \text{if \enskip $\betavar^\top \Zcut{i}{:,j} \leq \bl{i}{j} [\hnuvar{i}{j}]_+$}
\end{cases}
\]
\end{theorem}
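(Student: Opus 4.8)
The plan is to prove the bound by exhibiting an explicit \emph{feasible} point of the dual problem \eqref{eq:appendix_dual_formulation}--\eqref{eq:appendix_dual_unstable2} whose objective value equals $g(\alphavar,\betavar)$, and then invoking weak duality. Since that dual is derived from the LP \eqref{eq:lp_with_cuts} by Lagrangian duality, its objective at \emph{any} assignment $(\nuvar,\muvar,\tauvar,\gammavar,\pivar,\betavar)$ satisfying the equality constraints together with $\muvar,\tauvar,\gammavar,\pivar,\betavar\geq 0$ is a valid lower bound on $f^*_\text{LP-cut}$. Hence it suffices to construct such an assignment, layer by layer starting from $\nuvar{L}=-1$, reproducing the recursion in the statement, and to check that the accumulated objective collapses to $g(\alphavar,\betavar)$.

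First I would dispatch the stable neurons, where nothing is optimized: for $j\in\setip{i}$ and $j\in\setin{i}$, constraints \eqref{eq:appendix_dual_active} and \eqref{eq:appendix_dual_inactive} pin $\nuvar{i}{j}$ to exactly the expressions in the theorem, so these coordinates are reproduced verbatim and feed only the $-\sum_i\nuvar{i}^\top\bias{i}$ and $-\epsilon\|\nuvar{1}^\top\W{1}\x_0\|_1$ terms. The substantive work is at the unstable neurons $j\in\seti{i}$, where the four nonnegative duals $\pivar{i}{j},\gammavar{i}{j},\muvar{i}{j},\tauvar{i}{j}$ are coupled only through \eqref{eq:appendix_dual_unstable1}--\eqref{eq:appendix_dual_unstable2} and $\hnuvar{i}{j}:=\nuvar{i+1}^\top\W{i+1}{:,j}-\betavar^\top\hXcut{i}{:,j}$.

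The key step is to apply Lemma~\ref{lemma:optimization_pi_gamma} per unstable neuron with the identifications $u=\bu{i}{j}$, $l=\bl{i}{j}$, $q=-\betavar^\top\Zcut{i}{:,j}$, and mass $C=[\hnuvar{i}{j}]_+$. The observation that legitimizes the per-neuron maximization is that $\gammavar{i}{j}$ and $\muvar{i}{j}$ enter the objective and constraints but \emph{not} the downstream quantity $\nuvar{i}{j}=\pivar{i}{j}-\tauvar{i}{j}-\betavar^\top\Xcut{i}{:,j}$; thus maximizing the local term $\bl{i}{j}\pivar{i}{j}-\ReLU(\bu{i}{j}\gammavar{i}{j}+\bl{i}{j}\pivar{i}{j}-\betavar^\top\Zcut{i}{:,j})$ over $\gammavar{i}{j},\muvar{i}{j}$ does not disturb the backward recursion. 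I would then split into two regimes. When $\hnuvar{i}{j}>0$, set $\muvar{i}{j}=\tauvar{i}{j}=0$, so \eqref{eq:appendix_dual_unstable2} forces $\pivar{i}{j}+\gammavar{i}{j}=\hnuvar{i}{j}=C$, and the lemma returns $\pivar{i}{j}={\pivar{i}{j}}^*$ together with the first two branches of $\hfunc{i}{j}$; here $[\hnuvar{i}{j}]_-=0$, so $\nuvar{i}{j}={\pivar{i}{j}}^*-\betavar^\top\Xcut{i}{:,j}$ agrees with the stated update. When $\hnuvar{i}{j}\leq 0$, set $\pivar{i}{j}=\gammavar{i}{j}=0$ (so $C=0=[\hnuvar{i}{j}]_+$) and split the remaining mass as $\tauvar{i}{j}=\alphavar{i}{j}[\hnuvar{i}{j}]_-$ and $\muvar{i}{j}=(1-\alphavar{i}{j})[\hnuvar{i}{j}]_-$, both nonnegative precisely because $0\leq\alphavar{i}{j}\leq 1$; this produces the $-\alphavar{i}{j}[\hnuvar{i}{j}]_-$ slope term in $\nuvar{i}{j}$ and the third branch of $\hfunc{i}{j}$. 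In both regimes the lemma's three-way case split on $q$ relative to $-uC$ and $-lC$ translates exactly into the stated conditions on $\betavar^\top\Zcut{i}{:,j}$ against $\bl{i}{j}[\hnuvar{i}{j}]_+$ and $\bu{i}{j}[\hnuvar{i}{j}]_+$, and the clamping $\max(\min(\cdot,[\hnuvar{i}{j}]_+),0)$ in ${\pivar{i}{j}}^*$ realizes the boundary behavior of those branches.

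Finally I would assemble the pieces: the constructed assignment is dual-feasible by the nonnegativity and constraint checks above, and its objective equals the verbatim stable-layer terms plus $-\betavar^\top\bm{d}$ plus $\sum_{i=1}^{L-1}\sum_{j\in\seti{i}}\hfunc{i}{j}(\betavar)$, which is exactly $g(\alphavar,\betavar)$; weak duality then gives $g(\alphavar,\betavar)\leq f^*_\text{LP-cut}$. I expect the main obstacle to be the bookkeeping at the unstable neurons, namely arguing cleanly that the maximization over $\gammavar{i}{j},\muvar{i}{j}$ decouples from the downstream propagation of $\nuvar{i}{j}$ (so the backward recursion is self-consistent), unifying the two sign regimes of $\hnuvar{i}{j}$ through the single choice $C=[\hnuvar{i}{j}]_+$, and confirming that the $[\,\cdot\,]_+/[\,\cdot\,]_-$ accounting reproduces the piecewise definition of $\hfunc{i}{j}$ with the correct boundary conditions.
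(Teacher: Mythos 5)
Your proposal is correct and follows essentially the same route as the paper's own proof: derive the dual of the cut-augmented LP, apply Lemma~\ref{lemma:optimization_pi_gamma} per unstable neuron with $u=\bu{i}{j}$, $l=\bl{i}{j}$, $q=-\betavar^\top \Zcut{i}{:,j}$, $C=[\hnuvar{i}{j}]_+$, parameterize the leftover slack in $\tauvar{i}{j}$ by $\alphavar{i}{j}$, and conclude by weak duality since any feasible dual assignment yields a valid lower bound. The one caveat is that the local maximization over $\pivar{i}{j},\gammavar{i}{j}$ does feed into the downstream recursion through $\nuvar{i}{j}$ (so it is a heuristic closed-form choice rather than a decoupled optimum, as the paper itself remarks), but this does not affect soundness because feasibility is all that weak duality requires.
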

\endgroup

\begin{proof}
Given \eqref{eq:appendix_dual_unstable2}, for $j \in \seti{i}$, observing that the upper and lower bounds of ReLU relaxations for a single neuron cannot be tight simultaneously~\citep{wong2018provable}, $\pivar{i}{j} + \gammavar{i}{j}$ and $\muvar{i}{j} + \tauvar{i}{j}$ cannot be both non-zero\footnote{In theorey, it is possible that $\tauvar{i}{j} \neq 0$, $\gammavar{i}{j} \neq 0$, $\pivar{i}{j}=\muvar{i}{j}=0$ or $\pivar{i}{j} \neq 0$, $\muvar{i}{j} \neq 0$, $\tauvar{i}{j}=\gammavar{i}{j}=0$. This situation may happen when $\bu{i}{j}$ or $\bl{i}{j}$ is exactly tight and the solution $\x{i}{j}$ is at the intersection of one lower and upper linear equatlities. Practically, this can be avoided by adding a small $\delta$ to $\bu{i}{j}$ or $\bl{i}{j}$, and in most scenarios $\bu{i}{j}$ are $\bl{i}{j}$ are loose bounds so this situation will not occur. The same argument is also applicable to~\citep{wong2018provable}.}. Thus, we have
\begin{align}
\pivar{i}{j} + \gammavar{i}{j}  &= \left [ \nuvar{i+1}^\top \W{i+1}{:,j} - \betavar^\top \hXcut{i}{:,j} \right ]_+ := \left [ \hnuvar{i}{j} \right ]_+, \\
\muvar{i}{j} + \tauvar{i}{j} &= \left [ \nuvar{i+1}^\top \W{i+1}{:,j} - \betavar^\top \hXcut{i}{:,j} \right ]_- := \left [ \hnuvar{i}{j} \right ]_- \label{eq:constraint_mu_tau}.
\end{align}
To avoid clutter, we define $\hnuvar{i}{j} := \nuvar{i+1}^\top \W{i+1}{:,j} - \betavar^\top \hXcut{i}{:,j}$, and we define $[ \ \cdot \ ]_+ := \max(0, \cdot)$ and $[ \ \cdot \ ]_- := \min(0, \cdot)$.

To derived the proposed bound propagation rule, in \eqref{eq:appendix_dual_formulation}, we must eliminate variable $\gammavar{i}{j}$ and $\pivar{i}{j}$. Ignoring terms related to $\nuvar$, we observe that we can optimize the term $ \pivar{i}{j} \bl{i}{j} - \ReLU(\bu{i}{j}\gammavar{i}{j} + \bl{i}{j}\pivar{i}{j} - \betavar^\top \Zcut{i}{:,j})$ for each $j$ for $\seti{i}$ individually. We seek the optimal solution for the follow optimization problem on function $\hfunc$:
\begin{align}
\max_{\pivar{i}{j} \geq 0, \gammavar{i}{j} \geq 0} \hfunc{i}{j}(\pivar{i}{j}, \gammavar{i}{j} ; \betavar) :&= \pivar{i}{j} \bl{i}{j} - \ReLU(\bu{i}{j}\gammavar{i}{j} + \bl{i}{j}\pivar{i}{j} - \betavar^\top \Zcut{i}{:,j}) \label{eq:dual_single_opt} \\
\text{s.t.} \enskip \pivar{i}{j} + \gammavar{i}{j} &= \left [ \hnuvar{i}{j} \right ]_+  \nonumber \\
\pivar{i}{j} &\geq 0; \quad \gammavar{i}{j} \geq 0 \nonumber
\end{align}
Note that we optimize over $\pivar{i}{j}$ and $\gammavar{i}{j}$ here and treat $\betavar$ as a constant. Applying Lemma~\ref{lemma:optimization_pi_gamma} with $\pi=\pivar{i}{j}$, $\gamma=\gammavar{i}{j}$, $u=\bu{i}{j}$, $l=\bl{i}{j}$, $q=-\betavar^\top \Zcut{i}{:,j}$, $C=[\hnuvar{i}{j}]_+$ we obtain the optimal objective for~\eqref{eq:dual_single_opt}:
\[
\hfunc{i}{j}(\betavar) = \begin{cases}\bl{i}{j} \pivar{i}{j}
& \text{if \enskip $\bl{i}{j}[\hnuvar{i}{j}]_+ \leq \betavar^\top \Zcut{i}{:,j} \leq \bu{i}{j}[\hnuvar{i}{j}]_+ $} \\
0 & \text{if \enskip $\betavar^\top \Zcut{i}{:,j} \geq \bu{i}{j}[\hnuvar{i}{j}]_+ $} \\
\betavar^\top \Zcut{i}{:,j} & \text{if \enskip $\betavar^\top \Zcut{i}{:,j} \leq \bl{i}{j} [\hnuvar{i}{j}]_+$}
\end{cases}, \quad \pivar{i}{j} = \max \left ( \min \left( \frac{\bu{i}{j} [\hnuvar{i}{j}]_+ +  \betavar^\top \Zcut{i}{:,j}}{\bu{i}{j} - \bl{i}{j}}, C \right ), 0 \right )
\]
Substitute this solution of $\pivar{i}{j}$ into~\eqref{eq:appendix_dual_unstable1}, and also observe that due to~\eqref{eq:constraint_mu_tau}, $\tauvar{i}{j}$ is a variable between 0 and $\left [ \hnuvar{i}{j} \right ]_-$, so we can rewrite~\eqref{eq:appendix_dual_unstable1} as:
\[
\nuvar{i}{j} := \max \left ( \min \left (\frac{\bu{i}{j} [\hnuvar{i}{j}]_+ +  \betavar^\top \Zcut{i}{:,j}}{\bu{i}{j} - \bl{i}{j}}, [\hnuvar{i}{j}]_+ \right ), 0 \right ) - \alphavar{i}{j} [\hnuvar{i}{j}]_- - \betavar^\top \Xcut{i}{:,j}
\]
Here $0 \leq \alphavar{i}{j} \leq 1$ is an optimizable parameter that can be updated via its gradient during bound propagation, and any $0 \leq \alphavar{i}{j} \leq 1$ produces valid lower bounds. All above substitutions replace each dual variable $\pivar$, $\gammavar$ and $\muvar$ to a valid setting in the dual formulation, so the final objective $g(\alphavar, \betavar)$ is always a sound lower bound of $f^*_\text{LP-cut}$. This theorem establishes the soundness of \ourmethod.
\end{proof}
\revisedtext{Note that an optimal setting of $\alphavar$ and $\betavar$ do not necessarily lead to the optimal primal value $f^*_\text{LP-cut}$. The main reason is that~\eqref{eq:dual_single_opt} gives a closed-form solution for $\pivar$ and $\gammavar$ to eliminate these variables without considering other dual variables like $\nuvar$ to simplify the bound propagation process. To achieve theoretical optimality, $\pivar$ and $\gammavar$ can also be optimized.}

\section{More technical details and background of \ourmethodfull}
\label{sec:app_background}

\paragraph{Branch-and-bound} Our work is based on branch-and-bound (BaB), a powerful framework for neural network verification~\citep{bunel2018unified} which many state-of-the-art verifiers are based on~\citep{wang2021beta,DePalma2021,henriksen2021deepsplit,ferrari2022complete}. Here we give a brief introduction for the concept of branch-and-bound for readers who are not familiar with this field.

\begin{figure}
    \centering
    \includegraphics[width=0.99\textwidth]{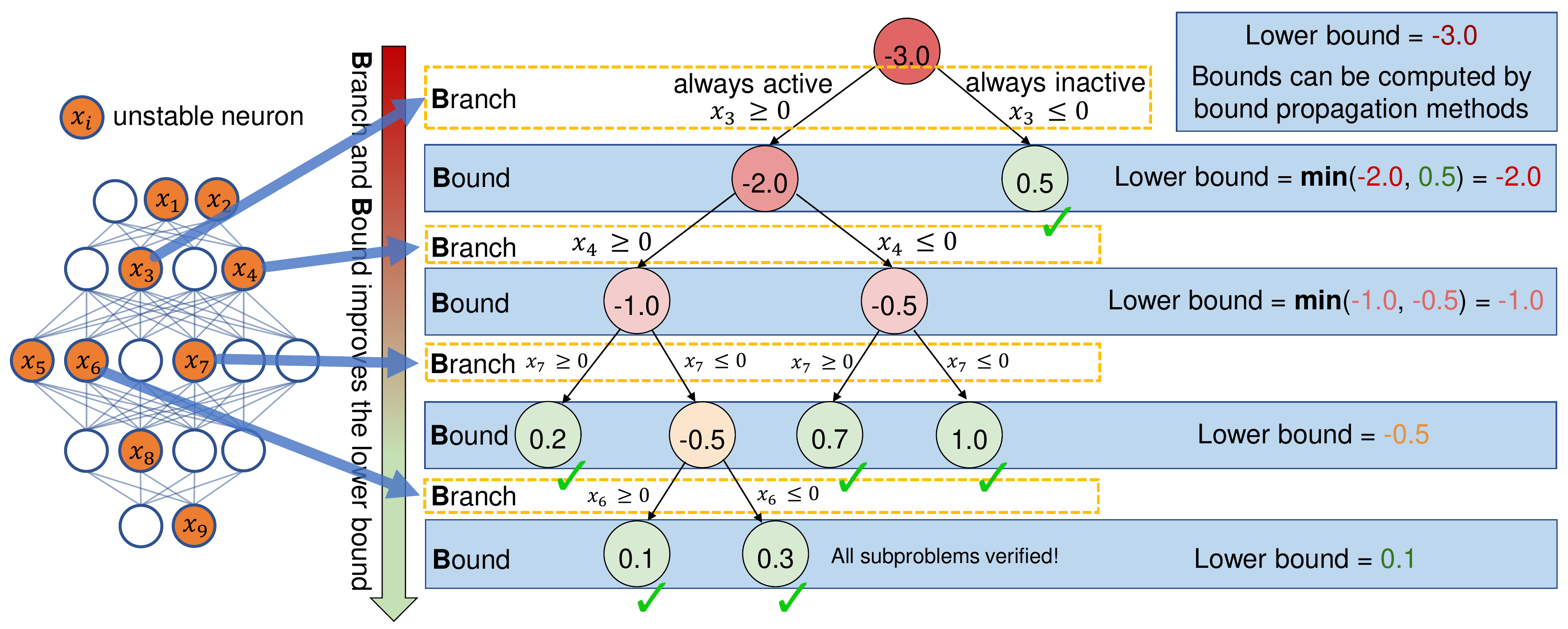}
    \caption{\textbf{B}ranch and \textbf{B}ound (BaB) for neural network verification. A \emph{unstable} ReLU neuron (e.g., $x_1$ may receive either negative or positive inputs for some input $\x$ in perturbation set $\mathcal{C}$. On the other hand, a stable ReLU neuron is always positive or negative for all $\x \in \mathcal{C}$, so it is a linear operation and branching is not needed.}
    \label{fig:bab_framework}
\end{figure}

We illustrate the branch-and-bound process in Figure~\ref{fig:bab_framework}. Neural network verification seeks to minimize the objective~\eqref{eq:verification_definition}: $f^* = \min_{\x} f(\x), \forall \x\in \gC$, where $f(\x)$ is a ReLU network under our setting. A positive $f^*$ indicates that the network can be verified.
However, since ReLU neurons are non-linear, this optimization problem is non-convex, and it usually requires us to relax ReLU neurons with linear constraints (e.g., the LP relaxation in \eqref{eq:lp_integer}) to obtain a lower bound for objective $f^*$. This lower bound ($-3.0$ at the root node in Figure~\ref{fig:bab_framework}) might become negative, even if $f^*$ (the unknown ground-truth) is positive. Branch-and-bound is a systematic method to tighten this lower bound. First, we split an unstable ReLU neuron, such as $x_3$, into two cases: $x_3 \geq 0$ and $x_3 \leq 0$ (the \emph{branching step}), producing two subproblems each with an additional constraint $x_3 \geq 0$ or $x_3 \leq 0$. In each subproblem, neuron $x_3$ does not need to be relaxed anymore, so the lower bound of $f^*$ becomes tighter in each of the two subdomains in the subsequent \emph{bounding step} ($-3.0$ becomes $-2.0$ and $0.5$) in Figure~\ref{fig:bab_framework}. Any subproblem with a positive lower bound is successfully verified and no further split is needed. Then, we repeat the branching and bounding steps on subproblems that still have a negative lower bound. We terminate when all unstable neurons are split or all subproblems are verified. In Figure 3, all leaf subproblems have positive lower bounds so the network is verified. On the other hand, if we have split all unstable neurons and there still exist domains with negative bounds, a counter-example can be constructed.

The contribution of this work is to improve the bounding step using cutting planes. As one can observe in Figure~\ref{fig:bab_framework}, if we can make the bounds tighter, they approach to zero faster and less branching is needed. Since the number of the subproblems may be exponential to the number of unstable neurons in the worst case, it is crucial to improve the bounds quickly. 
Informally, cutting planes are additional valid constraints for the relaxed subproblems, and adding these additional  constraints as in~\eqref{eq:lp_with_cuts} makes the lower bounds tighter. We refer the readers to integer programming literature~\citep{Bertsimas2005Optimization,Conforti2014integer} for more details on cutting plane methods. Existing neural network verifiers mostly use bound propagation method~\citep{zhang2018efficient,singh2019abstract,xu2020fast,wang2021beta,DePalma2021} for the bounding step thanks to their efficiency and scalability, but none of them can handle general cutting planes. Our \ourmethod is the first bound propagation method that supports general cutting planes, and we demonstrated its effectiveness in Section~\ref{sec:experiments} when combined with cutting planes generated by a MIP solver.

\paragraph{Soundness and Completeness} \ourmethod is sound because Theorem~\ref{thm:bound_propagation} guarantees that we always obtain a sound lower bound of the verification problem as long as valid cutting planes (generated by a MIP solver in our case) are added. Our method, when combined with branch-and-bound, is also complete because we provide a strict improvement in the bounding step over existing methods such as~\citep{wang2021beta}, and the arguments for completeness in existing verifiers using branch-and-bound on splitting ReLU neurons such as~\citep{bunel2020branch,wang2021beta} are still valid for this work.

\section{More details on experiments}
\label{sec:app_exp}

\subsection{Experimental Setup }
Our experiments are conducted on a desktop with an AMD Ryzen 9 5950X CPU, one NVIDIA RTX 3090 GPU (24GB GPU memory), and 64GB CPU memory. Our implementation is based on the open-source $\alpha\!,\!\beta$-CROWN verifier\footnote{\url{https://github.com/huanzhang12/alpha-beta-CROWN}} with cutting plane related code added. Both $\alpha$-CROWN+MIP and \ourmethodfull use all 16 CPU cores and 1 GPU. 
\texttt{gurobi} is used as the MIP solver in $\alpha$-CROWN+MIP. Although \texttt{gurobi} usually outperforms other MIP solvers for NN verification problems, it cannot export cutting planes, so our cutting planes are acquired by the \texttt{cplex}~\cite{cplex} solver (version 22.1.0.0). We use the Adam optimizer~\citep{kingma2014adam} to solve both ${\alphavar}$ and ${\betavar}$ with 20 iterations. The learning rates are set as 0.1 and 0.02 (0.01 for \texttt{oval20}) for optimizing ${\alphavar}$ and ${\betavar}$ respectively. We decay the learning rates with a factor of 0.9 (0.8 for \texttt{oval20}) per iteration. Timeout for properties in \texttt{oval20} are set as 1 hour follow the original benchmark. Timeout for \texttt{oval21} and \texttt{cifar10-resnet} are set as 720s and 300s respectively, the same as in VNN-COMP 2021. Timeout for SDP-FO models are set as 600s for  $\alpha$-CROWN+MIP and MN-BaB, and a shorter 200s timeout is used for \ourmethodfull.

We summarize the model structures and batch size used in our experiments in Table~\ref{tab:model_structres}. The CIFAR-10 Base, Wide and Deep models are used in \texttt{oval20} and \texttt{oval21} benchmarks.

 \begin{table*}[h]
    \centering
        \caption{Model structures used in our experiments.  The notation Conv($a$, $b$, $c$) stands for a conventional layer with $a$ input channel, $b$ output channels and a kernel size of $c \times c$. Linear($a$, $b$) stands for a fully connected layer with $a$ input features and $b$ output features. ResBlock($a$, $b$) stands for a residual block that has $a$ input channels and $b$ output channels. We have ReLU activation functions between two consecutive linear or convolutional layers.}
    \label{tab:model_structres}
    \adjustbox{max width=1\textwidth}{
    \begin{tabular}{c|c|c}
    \toprule
         Model name & Model structure & Batch size \\
    \midrule
             Base (CIFAR-10) & Conv(3, 8, 4) - Conv(8, 16, 4) - Linear(1024, 100) - Linear(100, 10) & 1024 \\
         Wide (CIFAR-10) & Conv(3, 16, 4) - Conv(16, 32, 4) - Linear(2048, 100) - Linear(100, 10) & 1024 \\
         Deep  (CIFAR-10)& Conv(3, 8, 4) - Conv(8, 8, 3) -  Conv(8, 8, 3) - Conv(8, 8, 4) - Linear(412, 100) - Linear(100, 10) & 1024 \\
          \midrule
          
        \texttt{cifar10-resnet2b} &  Conv(3, 8, 3) - ResBlock(8, 16) - ResBlock(16, 16)) - Linear(1024, 100) - Linear(100, 10)  & 2048 \\

         \texttt{cifar10-resnet4b}   &  Conv(3, 16, 3) - ResBlock(16, 32) - ResBlock(32, 32)) - Linear(512, 100) - Linear(100, 10) & 2048 \\      
          
          \midrule
         CNN-A-Adv (MNIST) & Conv(1, 16, 4) - Conv(16, 32, 4) - Linear(1568, 100) - Linear(100, 10) & 4096 \\

         CNN-A-Adv/-4 (CIFAR-10) & Conv(3, 16, 4) - Conv(16, 32, 4) - Linear(2048, 100) - Linear(100, 10) & 4096 \\

         CNN-B-Adv/-4 (CIFAR-10)&  Conv(3, 32, 5) - Conv(32, 128, 4) - Linear(8192, 250) - Linear(250, 10) & 1024 \\
          CNN-A-Mix/-4 (CIFAR-10) & Conv(3, 16, 4) - Conv(16, 32, 4) - Linear(2048, 100) - Linear(100, 10) &  4096 \\

         \bottomrule

    \end{tabular}
    }
\end{table*}
\subsection{A case study on cutting planes}
\label{sec:app_cutting_planes}

The effectiveness of \ourmethodfull motivates us to take a more careful look at the cutting planes generated by off-the-shelf MIP solvers, and understand how well it can contribute to tighten the lower bounds and strengthen verification performance. We use the \texttt{oval21} as a sample case study because \ourmethodfull is very effective on this benchmark.

The \texttt{oval21} benchmark has 30 instances, each with 9 target labels (properties) to verify. Among the total of 270 properties, we filter out the easy cases where fast incomplete verifiers like $\alpha$-CROWN can verify directly, and 39 \emph{hard properties} remain which must be solved using branch and bound and/or cutting planes. Cutting planes are generated on these hard properties and they greatly help \ourmethod.


\paragraph{Number of cuts used to verify each property.} The maximal number of cuts applied per property is 4,162 and the minimal number is 318. On average, we have 1,683 cuts applied to our GCP-CROWN to solve these hard properties.

\paragraph{Improvements on lower bounds.} In branch-and-bound, we lower bound the objective of Eq.~\ref{eq:verification_definition} with ReLU split constraints~\citep{wang2021beta}. A tighter lower bound can reduce the number of branches used and usually leads to stronger verification. We measure how well the cuts generated by off-the-shelf solvers in improving the tightness of the lower bound for verification. Without branching and cutting planes, the average $\alpha$-CROWN lower bound is -2.54. With generated cutting planes, we can improve the lower bound by 0.51 on average and can directly verify 4 out of 39 hard properties without branching. The lower bound without branching can be maximally improved by 1.54 and minimally improved by 0.04.

\paragraph{Structure of Generated Cuts.} Finally, we investigate the variables involved in each generated cutting plane. In total, the MIP solver generates 65,647 cutting planes in total for the 39 hard properties. 65,301 of the cuts involve variables across multiple layers. It indicates that single layer cutting planes (e.g., constraints involving pre- and post-activation variables of a single ReLU layer only) commonly used in previous works~\citep{singh2019beyond,muller2021precise} are not optimal in general. Additionally, all of 65,647 cuts have at least one ReLU \emph{integer variable} $\z{i}$ used, which was not supported in existing bound propagation methods. 65,197 of all cuts involve at least one variable of input neurons. 23,600 of all cuts have at least one pre-ReLU variables and 51,617 of them have at least one post-ReLU variables.








\end{document}